\newtheorem{definition}{Definition}[section]
\newtheorem{example}{Example}[section]
\newtheorem{proposition}{Proposition}[section]
\newtheorem{theorem}{Theorem}[section]
\newtheorem{lemma}[theorem]{Lemma}
\newcommand{\nop}[1]{{}}
\def\smallblacksq{{\tiny $\blacksquare$}}
\def\dini{d_{i}^{in}}
\def\dinm{{d_{*}^{in}}}
\def\FB{\mathit{FBnd}}
\def\RB{\mathit{RBnd}}
\def\IB{\mathit{IBnd}}
\def\ELIG{\mathit{Elig}}
\def\QUAL{\mathit{Qual}}
\def\BOUND{\mathit{Bound}}
\def\true{\textsf{Tr}}
\def\false{\textsf{F}}
\def\bnd{\mathit{bnd}}
\def\strtie{\textit{strTie}}
\def\wktie{\textit{wkTie}}
\def\male{\textit{male}}
\def\female{\textit{fem}}
\def\watchesA{\textit{visPgA}}
\def\watchesB{\textit{visPgB}}
\def\gedge{g_{edge}}
\def\gnode{g_{node}}
\def\mancalog{\textsf{MANCaLog}}
\def\tip{\mathit{tip}}
\def\softtip{{\mathit{sftTp}}}
\def\negtip{{\mathit{ngTp}}}
\def\ifl{\mathit{ifl}}
\def\IFL{\mathit{IFL}}
\def\setN{\textbf{N}}
\def\calf{\mathcal{F}}
\def\tarrow{\stackrel{\Delta t}{\leftarrow}}
\def\thrarrow{\stackrel{3}{\leftarrow}}
\def\twoarrow{\stackrel{2}{\leftarrow}}
\def\onearrow{\stackrel{1}{\leftarrow}}
\def\call{\mathcal{L}}
\def\calg{\mathcal{G}}
\def\cali{\mathcal{I}}
\def\inf{\mathit{inf}}
\def\sup{\mathit{sup}}
\def\tm{t_\mathit{max}}
\def\<{\langle}
\def\>{\rangle}
\def\call{\mathcal{L}}
\def\calf{\mathcal{F}}
\begin{document}

\title{MANCaLog: A Logic for Multi-Attribute Network Cascades (Technical Report)}

\numberofauthors{3}
\author{
%
%
\alignauthor
Paulo Shakarian\\
       \affaddr{Network Science Center and}\\
       \affaddr{Dept. of Electrical Engineering}\\
       \affaddr{and Computer Science}\\
       \affaddr{U.S. Military Academy}\\
       \affaddr{West Point, NY 10996}\\
       \email{paulo@shakarian.net}
\alignauthor
Gerardo I. Simari\\
       \affaddr{Dept. of Computer Science}\\
       \affaddr{University of Oxford}\\
       \affaddr{Wolfson Building, Parks Road}\\
       \affaddr{Oxford OX1 3QD, UK}\\
       \email{gerardo.simari@cs.ox.ac.uk}
\alignauthor Robert Schroeder\\
       \affaddr{CORE Lab}\\
       \affaddr{Defense Analysis Dept.}\\
       \affaddr{Naval Postgraduate School}\\
       \affaddr{Monterey, CA 93943}\\
       \email{rcschroe@nps.edu}
}

\maketitle

\begin{abstract}
The modeling of cascade processes in multi-agent systems in the form of complex networks has in recent years become 
an important topic of study due to its many applications: the adoption of commercial
products, spread of disease, the diffusion of an idea, etc. In this paper, we begin by identifying a desiderata of seven properties that a framework for modeling such processes should satisfy: 
the ability to represent attributes of both nodes and edges, 
an explicit representation of time,
the ability to represent non-Markovian temporal relationships, 
representation of uncertain information, 
the ability to represent competing cascades,
allowance of non-monotonic diffusion, and 
computational tractability. We then present the $\mancalog$ language, a formalism
based on logic programming that satisfies all these desiderata, and focus on algorithms for finding minimal 
models (from which the outcome of cascades can be obtained) as well as how this formalism can be applied in real world scenarios. We are not aware of any other formalism in the literature that
meets all of the above requirements.
\end{abstract}


\category{I.2.4}{Artificial Intelligence}{Knowledge Representation Formalisms and Methods}[Representation Languages]

\terms{Languages, Algorithms}

\keywords{Complex Networks, Cascades, Logic Programming}


\section{Introduction and Related Work}
\label{sec:intro}

An epidemic working through a population, cascading electrical power failures, product adoption, and the spread of a mutant gene are all examples of diffusion processes that can happen in multi-agent systems structured as complex networks.  These network processes have been studied in a variety of disciplines, including computer science~\cite{kleinberg}, biology~\cite{lieberman_evolutionary_2005}, sociology~\cite{Gran78}, economics~\cite{schelling}, and physics~\cite{sood08}.  Much existing work in this area is based on pre-existing models in sociology and economics -- in particular the work of~\cite{Gran78,schelling}.  However, recent examinations of social networks -- both analysis of large data sets and experimental -- have indicated that there may be additional factors to consider that are not taken into account by these models.  These include the attributes of nodes and edges, competing diffusion processes, and time.  In this paper, we outline seven design criteria (Section~\ref{sec:criteria}) for such a framework and introduce $\mancalog$ (Section~\ref{prelimSec}), which is to the best of our knowledge the first logical language for modeling diffusion in  complex networks that meets these criteria.  $\mancalog$ is a rule-based framework (inspired by logic programming) that can richly express how agents adopt or fail to adopt certain behaviors, and how these behaviors cascade through a network.  We also introduce fixed-point based algorithms that allow for the calculation of the result of the diffusion process in Section~\ref{fpSec}.  Note that these algorithms are proven not only to be correct, but also to run in polynomial time.  Hence, our approach can not only better express many aspects of cascades in complex networks, but it can do so in a reasonable amount of time.  We conclude by discussing applications of $\mancalog$ in Section~\ref{sec:learn}.

\medskip
\noindent
{\em Proofs of all results stated in this paper can be found in the appendix.}

\subsection{Desiderata of Properties}
\label{sec:criteria}

We begin by identifying a set of criteria that we believe a framework for reasoning about cascades in complex networks should satisfy.

\smallskip
\noindent
\textbf{1.\ Multiply labeled and weighted nodes and edges.}
Many existing frameworks for studying diffusion in complex networks assume that there is only one type of vertex that may become ``active''~\cite{kleinberg} or may ``mutate''~\cite{lieberman_evolutionary_2005,sood08} and only one possible relationship between nodes.  In reality, nodes and edges often have different properties.  For instance, labels on edges can be used to differentiate between strong and weak ties (edge types) -- a concept that is well studied~\cite{granovetter1973}.  Recently, such attributes of nodes have been shown to impact influence in a network~\cite{aral12}.

\smallskip
\noindent
\textbf{2.\ Explicit Representation of Time.}
Most work in the literature assumes static models, with the exception of the recent developments
in~\cite{goyal2008discovering,goyal2010learning,goyal2011data}, which assume the existence of a timestamped log
referring to actions taken in the network in order to learn how nodes influence each other.
Though~\cite{goyal2008discovering} tackles the problem of predicting the time at which a certain node will take an
action, the authors make several simplifying assumptions such as monotonicity of probability functions, probabilistic
independence, sub-modularity and, most importantly for this criterion, a modeling of time solely based on temporal decay
of influence. We seek a richer model of temporal relationships between conditions in the network structure, the
current state of the cascades in process, and how influence propagates.

\smallskip
\noindent
\textbf{3.\ Non-Markovian Temporal Relationships.}
Apart from time being explicitly represented, the temporal dependencies should be able to span multiple units of time.  Hence, the ``memoryless'' mode of a standard Markov process, where only the information of the current state is required, is insufficient.  Here, we strive to create a framework where dependencies can be from other earlier time steps.  This issue has been previously studied with respect to more general logic programming frameworks such as~\cite{apt1}, but to our knowledge has not been applied to social networks.

\smallskip
\noindent
\textbf{4.\ Representation of Uncertainty.}
As in practice it is not always possible to judge the attributes of all individuals in a network, an element of uncertainty must be included.  However, in connection with point~7, this should not be at the expense of tractability.  For instance, the probabilistic models of~\cite{kleinberg} are normally addressed with simulation (and hence do not scale well) as the computation of the expected number of activated nodes is a $\#P$-hard problem~\cite{chen10}.

\smallskip
\noindent
\textbf{5.\ Competing Cascades.}
Often, in real-world situations there will be competing cascading processes.  For example, in evolutionary graph theory~\cite{lieberman_evolutionary_2005}, ``mutants'' and ``residents'' compete for nodes in the network -- the success of one hinges on the failure of the other.

\smallskip
\noindent
\textbf{6.\ Non-Monotonic Cascades.}
In much existing work on cascades in complex networks, the number of nodes attaining a certain property at each time step can only increase.  However, if we allow for competing cascades in the same model, we cannot have such a strong restriction as the success of one cascade may come at the expense of another.

\smallskip
\noindent
\textbf{7.\ Tractability.}
The social networks of interest in today's data mining problems often have millions of nodes.  It is reasonable to expect that soon billion-node networks will be commonplace.  Any framework for dealing with these problems must be solvable in a reasonable amount of time and offer areas for practical improvement for further scalability.

\subsection{Related Work}
\label{sec:rel-work}

The above criteria can be summarized as the desire to design the most expressive language for network cascades possible while still allowing computation of the outcome of a diffusion process to be completed in a tractable amount of time.  As a comparison, let us briefly describe some relevant related work.  Perhaps the best known general model for representing diffusion in complex networks is the independent cascade/linear threshold (IC/LT) model of~\cite{kleinberg}. However, although this framework was shown to be capable of expressing a wide variety of sociological models, it assumes the Markov property and does not allow for the representation of multiple attributes on vertices and edges. A more recent framework, social network optimization problems (SNOPs)~\cite{snops} uses logic programming to allow for the representation of attributes, but this framework does not allow for competing processes or non-monotonic cascades. A related logic programming framework, competitive diffusion (CD)~\cite{bss10} allows for competitive diffusion and non-monotonic processes but does not explicitly represent time and also makes Markovian assumptions. Further, we also note that the semantics of CD yields a ``most probable interpretation'' that is not a unique solution. Hence, a given model in that framework can lead to multiple and possibly contradictory, outcomes to a cascade (this problem is avoided in $\mancalog$).  Another popular class of models is Evolutionary Graph Theory (EGT)~\cite{lieberman_evolutionary_2005}, which is highly related to the voter model (VM)~\cite{sood08}. Although this framework allows for competing processes and non-monotonic diffusion, it also makes Markovian assumptions while not explicitly representing time.  Further, determining the outcome of a cascade in those models is NP-hard, while determining the outcome in $\mancalog$ can be accomplished in polynomial time.  Table~\ref{rwTab} lists how these models compare to $\mancalog$ when considering our design criteria.

\begin{table*}
\caption{Comparison with other models}
\label{rwTab}
\begin{center}
\begin{tabular}{|l|c|c|c|c|c|c|}
\hline
Criterion  & MANCaLog &  IC/LT \cite{kleinberg} & SNOP \cite{snops} & CD \cite{bss10} & EGT/VM \cite{lieberman_evolutionary_2005}\\
\hline
\hline
1.\ Labels & Yes &No & Yes &Yes &No  \\
\hline
2.\ Explicit Representation of Time & Yes &  No & Yes & No & Yes  \\
\hline
3.\ Non-Markovian Temporal Relationships & Yes &No & No &No & No\\
\hline
4.\ Uncertainty & Yes &   Yes & Yes &Yes & Yes  \\
\hline
5.\ Competing Cascades & Yes & No & No &Yes &Yes  \\
\hline
6.\ Non-monotonic Cascades & Yes &No & No &Yes& Yes \\
\hline
7.\ Tractablity & PTIME & $\#$P-hard & PTIME & PTIME & NP-hard  \\
\hline
\end{tabular}
\end{center}
\end{table*}

\section{Framework}
\label{prelimSec}

\subsection{Syntax and Semantics}

In this paper we assume that agents are arranged in a directed graph (or network) $G=(V,E)$, where the set of nodes corresponds to the agents, and the edges model the relationships between them.
We also assume a set of labels $\call$, which is partitioned into two sets: {\em fluent} labels $\call_{f}$ (labels that can change over time) and {\em non-fluent} labels $\call_{nf}$ (labels that do not); labels can be applied to both the nodes and edges of the network.  We will use the notation $\calg = V \cup E$ to be the set of all \textit{components} (nodes and edges) in the network.  Thus, $c \in \calg$ could be either a node or an edge.

\begin{example}
\label{ex1}
We will use the sample online social network $G_{soc}$ shown in Figure~\ref{nwFig} as the running example; $\calg_{soc}$ is used to denote the set of components of $G_{soc}$.  Here we have $\call_{nf}=\{\male,\female,$ $\strtie,\wktie\}$ representing male, female, strong ties and weak ties, respectively.  Additionally, we have $\call_{f} = \{\watchesA,\watchesB \}$ representing visiting webpage A and visiting webpage B, respectively.
\hfill\smallblacksq
\end{example}

\begin{figure}[t!]
  \centering
  \includegraphics[width=0.45\textwidth]{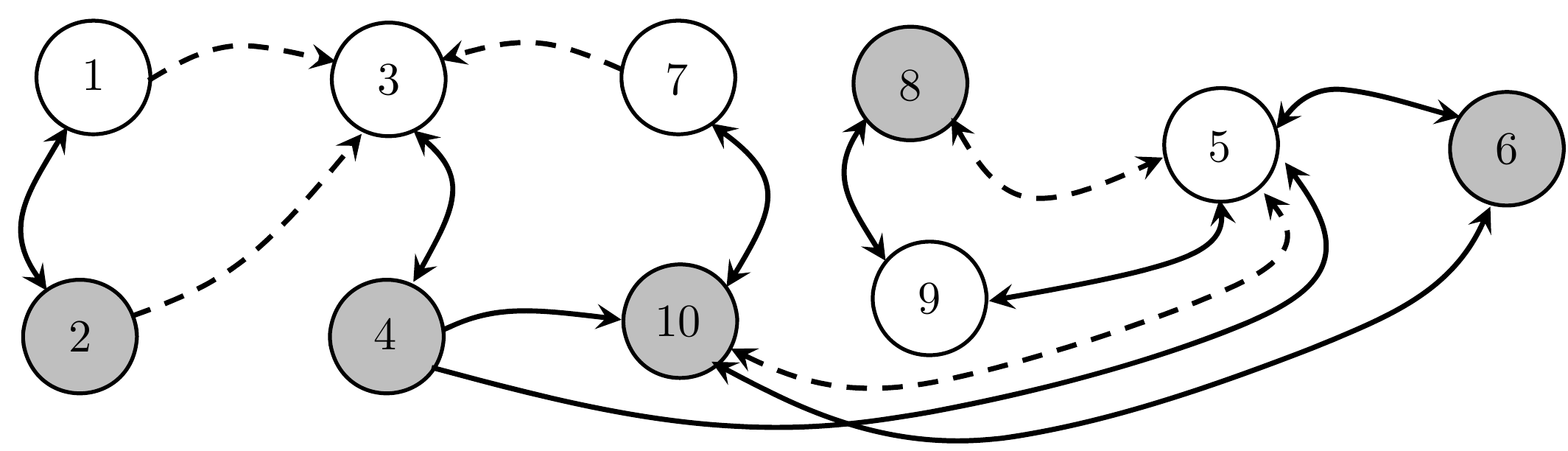}
      \caption{Simple online social network $G_{soc}$.  Solid edges are labeled with $\strtie$, while dashed edges are labeled with $\wktie$.  White nodes are labeled with $\male$, while gray nodes are labeled with $\female$. Arrows represent the direction of the edge; double-headed edges represent two edges with the same label.}\label{nwFig}
\end{figure}

In this paper, we present a logical language where we use atoms, referring to labels and weights, to describe properties of the nodes and edges.  Though labels themselves could be modeled as atoms instead of predicates (to model non-ground labelings that allow for greater expressibility), for simplicity of presentation we leave this to future work.  The first piece of the syntax is the \textbf{network atom}.

\begin{definition}[Network Atom]
Given label $L \in \call$ and weight interval $\bnd \subseteq [0,1]$, then $\langle L, \bnd \rangle$ is a \textbf{network atom}.  A network atom is fluent (resp., non-fluent) if $L \in \call_{f}$ (resp., $L \in \call_{\mathit{nf}}$). We use $NA$ to denote the set of all possible network atoms.
\end{definition}

Network atoms describe  properties of nodes and edges.  The definition is intuitive: $L$ represents a property of the vertex or edge, and associated with this property is some weight that may have associated uncertainty -- hence represented as an interval $\bnd$, which can be open or closed. An invalid bound is represented by $\emptyset$, which is equivalent to all other invalid bounds.

\begin{definition}[World]
A world $W$ is a set of network atoms such that for each $L \in \call$ there is no more than one network atom of the form $\<L,\bnd \>$ in $W$.
\end{definition}

A \textbf{network formula} over $NA$ is defined using conjunction, disjunction, and negation in the usual way.  If a formula contains only non-fluent (resp., fluent) atoms, it is a non-fluent (resp., fluent) formula.

\begin{definition}[Satisfaction of Worlds]
Given a world $W$ and network formula $f$, \textbf{satisfaction} of $W$ by $f$ (denoted $W \models f$) is defined:
\begin{itemize}
\item If $f = \langle L, [0,1]  \rangle$ then $W \models f$.

\item If $f = \langle L, \emptyset  \rangle$ then $W \not\models f$.

\item\label{chgOne}
If $f = \langle L, \bnd \rangle$, with $\bnd \neq \emptyset$ and $\bnd \neq [0,1]$, then 
$W \models f$ iff there exists $\langle L, \bnd' \rangle \in W$ s.t. $\bnd' \subseteq \bnd$.

\item If $f = \neg f'$ then $W \models f$ iff $W \not\models f'$.

\item If $f = f_1 \wedge f_2$ then $W \models f$ iff $W \models f_1$ and $W \models f_2$.

\item If $f = f_1 \vee f_2$ then $W \models f$ iff $W \models f_1$ or $W \models f_2$.
\end{itemize}
\end{definition}

For some arbitrary label $L \in \call$, we will use the notation $\true = \<L,[0,1]\>$ and $\false = \<L,\emptyset \>$ to represent a tautology and contradiction, respectively.  For ease of notation (and without loss of generality), we say that if there does not exist some $\bnd$ s.t.\ $\<L, \bnd\> \in W$, then this implies that $\<L, [0,1]\> \in W$.

\begin{example}
\label{ex2}
Following from Example~\ref{ex1}, the network atom $\< female, [1,1] \>$ can be used to identify a node as a woman.  Likewise, the world $W =$
\[
\Big\{\<\female, [1,1]\>, \<\male, [0,0]\>, \<\watchesA,[1,1]\>, \<\watchesB,[0,0]\>\Big\}
\]
might be used to identify a woman who visits webpage A.  Clearly, we have that $W \models$
\[
\< \female, [1,1] \> \wedge \neg \<\watchesA,[0.5,0.9]\> \wedge \neg \<\watchesB,[0.1,0.7]\>
\]
Note that the network atoms formed with $\strtie$ and $\wktie$ are not present; this could be due to the fact that such a world is used to describe a node and not an edge, and hence there is no information about those two labels.  As such is the case, $W \models \<\strtie,[0,1]\> \wedge \<\wktie,[0,1]\>$.
\hfill\smallblacksq
\end{example}

The idea is to use $\mancalog$ to describe how properties (specified by labels) of the nodes in the network change over time.  We assume that there is some natural number $\tm$ that specifies the total amount of time we are considering, and we use $\tau = \{t \; | \; t \in [0,\tm]\}$ to denote the set of all time points.  How well a certain property can be attributed to a node is based on a \textit{weight} (to which the $\bnd$ bound in the network atom refers).  As time progresses, a weight can either increase/decrease and/or become more/less certain. We now introduce the $\mancalog$ fact, which states that some network atom is true for a node or edge during certain times.

\begin{definition}[$\mancalog$ Fact]
If $[t_1,t_2] \subseteq [0,\tm]$, $c \in \calg$, and $a \in NA$, then $(a,c):[t_1,t_2]$ is a $\mancalog$ \textbf{fact}.  A fact is fluent (resp., non-fluent) if atom $a$ is fluent (resp., non-fluent).  All non-fluent facts must be of the form $(a,c):[0,\tm]$.  Let $\calf$ be the set of all facts and $\calf_{nf}, \calf_{f}$ be the set of all non-fluent and fluent facts,
respectively.
\end{definition}

\begin{example}
\label{ex3}
Following from Example~\ref{ex2}, the following facts are based on Figure~\ref{nwFig}:
\begin{small}
\begin{eqnarray*}
F_1&=&(\<\male,[1,1]\>,1):[0,\tm]\\
F_2&=&(\<\female,[1,1]\>,1):[0,\tm]\\
F_3&=&(\<\male,[1,1]\>,3):[0,\tm]\\
F_4&=&(\<\strtie,[1,1]\>,(1,2)):[0,\tm]\\
F_5&=&(\<\strtie,[1,1]\>,(2,1)):[0,\tm]\\
F_6&=&(\<\wktie,[1,1]\>,(2,3)):[0,\tm]\\
F_7&=&(\<\watchesA,[0.8,1.0]\>,1):[0,\tm]\label{fact7}\\
F_8&=&(\<\watchesA,[0.5,1.0]\>,2):[0,\tm]\label{fact8}
\end{eqnarray*}
\end{small}

\noindent
For instance, agent~1 is male, and has a strong tie to agent~2, who is female.
\hfill\smallblacksq
\end{example}

\noindent
Next, we introduce integrity constraints (ICs).

\begin{definition}
Given fluent network atom $a$ and conjunction of network atoms $b$, an integrity constraint is of
the form $a \hookleftarrow b$.
\end{definition}

Intuitively, integrity constraint $\<L,\bnd\> \hookleftarrow b$ means that if at a certain time point a component (vertex or edge) of the network has a set of properties specified by conjunction $b$, then at that same time the component's weight for label $L$ must be in interval $\bnd$.

\begin{example}
Following from the previous examples, the integrity constraint $\<\male,[0,0]\> \hookleftarrow \<\female,[1,1]\>$ would require any node designated as a female to not be male.
\hfill\smallblacksq
\end{example}

We now define $\mancalog$ rules.  The idea behind rules is simple: an agent that meets some criteria is influenced by the set
of its neighbors who possess certain properties.  The amount of influence exerted on an agent by its neighbors is specified by an \textit{influence function}, whose precise effects will be described later on when we discuss the semantics.
As a result, a rule consists of four major parts: (i) an influence function, (ii) neighbor criteria, (iii) target criteria, and (iv) a target.  Intuitively, (i) specifies how the neighbors influence the agent in question, (ii) specifies which of the neighbors can influence the agent, (iii) specifies the criteria that cause the agent to be influenced, and (iv) is the property of the agent that changes as a result of the influence.

We will discuss each of these parts in turn, and then define rules in terms of these elements.  First, we define influence functions and neighbor criteria.

\begin{definition}[Influence Function]
An \textbf{influence function} is a function $\ifl : \setN \times \setN \rightarrow [0,1] \times [0,1]$ that satisfies
the following two axioms:

\smallskip
\noindent
1.\ $\ifl$ can be computed in constant ($O(1)$) time.

\smallskip
\noindent
2.\ For $x'>x$ we have $\ifl(x',y) \subseteq \ifl(x,y)$.

\smallskip
\noindent
We use $\IFL$ to denote the set of all influence functions.
\end{definition}

Intuitively, an influence function takes the number of qualifying influencers and the number of eligible influencers and returns a bound on the new value for the weight of the property of the target node that changes.
In practice, we expect the time complexity of such a function to be a polynomial in terms of the two arguments. However, as both arguments are naturals bounded by the maximum degree of a node in the network, this value will be much smaller than the size of the network -- we thus treat it as a constant here.

\begin{example}
The well-known ``tipping model'' originally introduced in \cite{Gran78,schelling} states that an agent adopts a behavior when a certain fraction of his incoming neighbors do so.  A common tipping function is the \textbf{majority threshold} where at least half of the agent's neighbors must previously adopt the behavior.  We can represent this using the following influence function:
\begin{equation*}
\tip(x,y) = \begin{cases} [1.0,1.0] & \textit{if } x/y \geq 0.5  \\ [0.0,1.0] & \text{otherwise} \end{cases}
\end{equation*}
This function says that an agent adopts a certain behavior if at least half of his incoming neighbors have some property (strong ties, weak ties, meet some requirement of gender, income, etc.) and that we have no information otherwise.  In our framework, we can leverage the bounds associated with the influence function to create a ``soft'' tipping function:
\begin{equation*}
\softtip(x,y) = \begin{cases} [0.7,1.0] & \textit{if } x/y \geq 0.5  \\ [0.0,1.0] & \text{otherwise} \end{cases}
\end{equation*}
Intuitively, the above function says that an agent adopts a behavior with a weight of at least $0.7$ if half of the incoming neighbors that have some attribute and meet some criteria, and we have no information otherwise.  Another possibility is to have an influence function that may reduce the weight that an agent adopts a certain behavior:
\begin{equation*}
\negtip(x,y) = \begin{cases} [0.0,0.2] & \textit{if } x=y \\ [0.0,1.0] & \text{otherwise} \end{cases}
\end{equation*}
The $\negtip$ function says that an agent will adopt a behavior with a weight no greater than $0.2$ if all of the incoming neighbors possessing some property meet some criteria, and that we have no information otherwise.
\hfill\smallblacksq
\end{example}

\begin{definition}[Neighbor Criterion]
If $\gedge,\gnode$ are non-fluent network formulas (formed over edges and nodes, respectively), $h$ is a conjunction of network atoms, and $\ifl$ is an influence function, then
$
(\gedge,\gnode,h)_\ifl
$
is a neighbor criterion.
\end{definition}
Formulas $\gnode$ and $h$ in a neighbor criterion specify the (non-fluent and fluent, respectively) criteria on a given neighbor, while formula $\gedge$ specifies the non-fluent criteria on the directed edge from that neighbor to the node in question.

The next component is the ``target criteria'', which are the criteria that an agent must satisfy in order to be influenced by its neighbors.  Ideas such as ``susceptibility''~\cite{aral12} can be integrated into our framework via this component.  We represent these criteria with a formula of non-fluent network atoms.  The final component, the ``target'', is simply the label of the target agent that is influenced by its neighbors.  Hence, we now have all the pieces to define a rule.

\begin{definition}[Rule]
\label{ruleDef}
Given fluent label $L$, natural number $\Delta t$, target criteria $f$ and neighbor criteria\\ $(\gedge,\gnode,h)_\ifl$, a $\mancalog$ Rule is of the form:
\begin{eqnarray*}
r = L &\tarrow& f, (\gedge,\gnode,h)_\ifl
\end{eqnarray*}
We will use the notation $head(r)$ to denote $L$.
\end{definition}
Note that the target (also referred to as the head) of the rule is a single label; essentially, the body of the rule characterizes a set of nodes, and this label is the one that is modified for each node in this set.
More specifically, the rule is essentially saying that when certain conditions for an agent and its neighbors are met, the $\bnd$ bound for the network atom formed with label $L$ on that agent changes. Later, in the semantics, we introduce network interpretations, which map components (nodes and edges) of the network to worlds at a given point in time. The rule dictates how this mapping changes in the next time step.

\begin{definition}[$\mancalog$ Program]
A program $P$ is a set of rules, facts, and integrity constraints s.t.\ each non-fluent fact $F \in \calf_{nf}$ appears no more than once in the program.  Let $\mathbf{P}$ be the set of all programs.
\end{definition}

\begin{example}
Following from the previous examples, we can have a $\mancalog$ program that leverage the $\softtip$ and $\negtip$ influence functions in rules that are more expressive than previous models.  Consider the following rules:
\begin{small}
\begin{eqnarray*}
R_1&=&\watchesA \twoarrow \\
&& \<\female,[1,1]\>,(\<\strtie,[0.9,1]\>,\true ,\<\watchesA,[0.9,1.0] \>)_\softtip \label{rule1}\\
R_2&=&\watchesB  \onearrow \\
&&\<\male,[1,1]\>,(\true,\true ,\<\watchesB,[0.8,1.0] \>)_\softtip \label{rule2}\\
R_3&=&\watchesA  \thrarrow \\
&& \<\male,[1,1]\>,(\true,\<\female,[1,1] \>,\neg\<\watchesA,[0.7,1.0] \>)_\negtip \label{rule3}
\end{eqnarray*}
\end{small}
\noindent
Rule $R_1$ says that a female agent in the network visits page A with a weight of at least $0.7$ (this is specified in
the $\mathit{sftTp}$ influence function) if at least half of her strong ties (with weight of at least $0.9$) visited the page (with a weight of at least $0.9$) two days ago.
The rest of the rules can be read analogously.
\hfill\smallblacksq
\end{example}

We now introduce our first semantic structure: the \textbf{network interpretation}.

\begin{definition}[Network Interpretation]
A network interpretation is a mapping of network components to sets of network atoms, $NI : \calg \rightarrow 2^{NA}$.  We will use $\mathbf{NI}$ to denote the set of all network interpretations.
\end{definition}

We note that not all labels will necessarily apply to all nodes and edges in the network.  For instance, certain labels may describe a relationship while others may only describe a property of an individual in the network.  If a given label $L$ does not describe a certain component $c$ of the network, then in a valid network interpretation $NI$, $\< L, [0,1] \> \in NI(c)$.

\begin{example}
\label{niEx}
Consider $G_{soc}'$, the induced subgraph of $G_{soc}$ that has only nodes $\{1,2,3,4,5\}$.  Table~\ref{niExTab} shows the contents of $NI_{1}$, an example network interpretation.
\hfill\smallblacksq
\end{example}

\begin{table}
\scriptsize
\caption{Example network interpretation, $NI_{1}$.}
\label{niExTab}
\begin{center}
\begin{tabular}{|l|l|l|l|l|l|l|}
\hline
Comp.\ & $\male$ & $\female$ & $\strtie$ & $\wktie$ & $\watchesA$ & $\watchesB$\\
\hline
\hline
1 & $[1,1]$& $[0,0]$&-&-& $[0.9,1.0]$ & $[0.8,1.0]$ \\
\hline
2 & $[0,0]$& $[1,1]$&-&-& $[0.0,0.3]$ & $[0.0,0.2]$ \\
\hline
3 & $[1,1]$& $[0,0]$&-&-& $[0.6,1.0]$ & $[0.0,0.2]$ \\
\hline
4 & $[0,0]$& $[1,1]$&-&-& $[0.0,0.2]$ & $[0.9,1.0]$ \\
\hline
5 & $[1,1]$& $[0,0]$&-&-& $[0.0,0.2]$ & $[0.7,1.0]$ \\
\hline
(1,2) & -& - & $[1,1]$ & $[0,0]$ & -& - \\
\hline
(2,1) & -& - & $[1,1]$ & $[0,0]$ & -& - \\
\hline
(1,3) & -& - & $[0,0]$ & $[1,1]$ & -& - \\
\hline
(2,3) & -& - & $[0,0]$ & $[1,1]$ & -& - \\
\hline
(3,4) & -& - & $[1,1]$ & $[0,0]$ & -& - \\
\hline
(4,3) & -& - & $[1,1]$ & $[0,0]$ & -& - \\
\hline
(4,5) & -& - & $[1,1]$ & $[0,0]$ & -& - \\
\hline
\end{tabular}
\end{center}
\end{table}

We define a $\mancalog$ interpretation as follows.

\begin{definition}[Interpretation]
A $\mancalog$ interpretation $I$ is a mapping of natural numbers in the interval $[0,\tm]$ to network interpretations,
i.e., $I : \setN \rightarrow \mathbf{NI}$.  Let $\cali$ be the set of all possible interpretations.
\end{definition}


\subsection{Satisfaction}

First, we define what it means for an interpretation to satisfy a fact and a rule.

\begin{definition}[Fact Satisfaction]
An interpretation $I$ \textbf{satisfies} $\mancalog$ fact $(a,c):[t_1,t_2]$, written $I \models (a,c):[t_1,t_2]$, iff $\forall t \in [t_1,t_2]$, $I(t)(c)\models a$.
\end{definition}

\begin{example}
\label{satFactEx}
Consider interpretation $I_1$, where $I_1(0)=NI_1$ (from Example~\ref{niEx}), and $\mancalog$ facts $F_7$ and $F_8$ from Example~\ref{ex3}.
In this case, $I_1 \models F_7$ and $I_1 \not\models F_8$.
\hfill\smallblacksq
\end{example}

For non-fluent facts, we introduce the notion of strict satisfaction, which enforces the bound in the interpretation to be set to exactly what the fact dictates.

\begin{definition}[Strict Fact Satisfaction]
In\-ter\-pre\-ta\-tion $I$ \textbf{strictly satisfies} $\mancalog$ fact $(c,a):[t_1,t_2]$ iff $\forall t \in [t_1,t_2]$, $a \in I(t)(c)$.
\end{definition}

Next, we define what it means for an interpretation to satisfy an integrity constraint.

\begin{definition}[IC Satisfaction]
An interpretation $I$ \textbf{satisfies} integrity constraint $a \hookleftarrow b$ iff for all $t \in \tau$ and $c \in \calg$, $I(t)(c)\models \neg b \vee a$.
\end{definition}

Before we define what it means for an interpretation to satisfy a rule, we require
two auxiliary definitions that are used to define the bound enforced on a label by a given rule, and
the set of time points that are affected by a rule.

\begin{definition}[$\BOUND$ function]
\label{boundDef}
For a given rule $r = L \tarrow f, (\gedge,\gnode,h)_\ifl$, node $v$, and network interpretation $NI$,
$\BOUND(r,v,NI) = $
\[
\ifl\Big(\Big|\QUAL\big(v,\gedge,\gnode,h,NI\big)\Big|, \Big|\ELIG\big(v,\gedge,\gnode,NI\big)\Big|\Big),
\]
where $\ELIG(v,\gedge,\gnode,NI) =$
\[
\Big\{v' \in V \; | \; NI(v') \models \gnode  \wedge (v',v) \in E\wedge NI\big((v',v)\big)\models \gedge\Big\}
\]
and $\QUAL(v,\gedge,\gnode,h,NI) =$
\[
\Big\{v' \in \ELIG(v,\gedge,\gnode,NI) \;|\;  NI(v') \models h\Big\}
\]
\end{definition}
Intuitively, the bound returned by the function depends on the influence function and the number of qualifying and eligible nodes that influence it.

\begin{definition}[Target Time Set]
For interpretation $I$, node $v$, and rule $r = L \tarrow f, (\gedge,\gnode,h)_\ifl$, the \textit{target time set} of $I,v,r$ is defined as follows:
\[
\mathit{TTS}(I,v,r) = \Big\{ t \in [0,\tm] \; | \;  I(t-\Delta t)(v) \models f\Big\}
\]
We also extend this definition to a program $P$, for a given $c \in \calg$ and $L \in \call$, as follows;
$\mathit{TTS}(I,c,L,P) =$
\begin{small}
\[
\bigcup_{r \in P, \mathit{head}(r)=L}\mathit{TTS}(I,c,r) \cup \big\{t\in [t_1,t_2] \; | \; (\<L,\bnd\>,c):[t_1,t_2] \in P\big\}
\]
\[
\cup \;
\Big\{ t \; | \; \<L, \bnd\> \hookleftarrow b \in P \wedge I(t)(c)\models b\Big\}
\]
\end{small}
\end{definition}

\noindent
We can now define satisfaction of a rule by an interpretation.

\begin{definition}
An interpretation $I$ \textbf{satisfies} a rule
$
r = L \tarrow f, (\gedge,\gnode,h)_\ifl
$
iff for all $v \in V$ and $t \in \mathit{TTS}(I,v,r)$ it holds that
\[
I(t)(v) \models \Big\langle L, \BOUND\big(r,v,I(t-\Delta t)\big)\Big\rangle.
\]
\end{definition}

\begin{example}
\label{satRule}
Let $I_1$ be the interpretation from Example~\ref{satFactEx}.  Suppose that $\<\watchesB, [0.8,1.0]\> \in I(1)(5)$.  In this case, $I_1 \models R_2$.  Let $I_2$ be equivalent to  $I_1$
except that we have
$\<\watchesB, [0.0,0.5]\> \in I_2(1)(3)$.  In this case, $I_2 \not\models R_2$.
\hfill\smallblacksq
\end{example}

We now define satisfaction of programs, and introduce {\em canonical interpretations}, in which time points that are not ``targets'' retain information from the last time step.

\begin{definition} For interpretation $I$ and program $P$:

\smallskip
\noindent
$I$ is a {\bf model} for $P$ iff it satisfies all rules, integrity constraints, and fluent facts in that program, strictly satisfies all non-fluent facts in the program, and for all $L \in \call,$ $c \in \calg$ and $t \notin \mathit{TTS}(I,c,L,P)$, $\<L,[0,1]\> \in I(c)(t)$.

\smallskip
\noindent
$I$ is a \textbf{canonical model} for $P$ iff it satisfies all rules, integrity constraints, and fluent facts in $P$, strictly satisfies all non-fluent facts in $P$, and for all $L \in \call,$ $c \in \calg,$ and $t \notin \mathit{TTS}(I,c,L,P)$, $\<L,[0,1]\> \in I(c)(t)$ when $t=0$ and $\<L,bnd\> \in I(t)(c)$ where $\<L,bnd\> \in I(t-1)(c)$, otherwise.
\end{definition}

\begin{example}
\label{canonEx}
Following from previous examples, if we consider interpretation $I_1$ and program $P=\{F_7,R_2\}$, we have that
$\<\watchesB,[0.0,0.2]\>$ must be in $I_1(1)(2)$ in order for $I_1$ to be canonical.
\hfill\smallblacksq
\end{example}

\subsection{Consistency and Entailment}


In this section we discuss consistency and entailment in $\mancalog$ programs, and explore
the use of minimal models towards computing answers to these problems.

\begin{definition}[Consistency]
A $\mancalog$ program $P$ is (canonically) consistent iff there exists a (canonical) model $I$ of $P$.
\end{definition}

\begin{definition}[Entailment]
A $\mancalog$ program $P$ (canonically) entails $\mancalog$ fact $F$ iff for all (canonical) models $I$ of $P$, it holds that $I \models F$.
\end{definition}

Now we define an ordering over models and define the concept of minimal model.  We then show that if we can find a minimal model then we can answer consistency, entailment, and tight entailment queries. To do so, we first define a pre-order over interpretations.

\begin{definition}[Preorder over Interpretations]
Given interpretations $I,I'$ we say $I \sqsubseteq^{pre} I'$ if and only if for all $t, v, L$ if there exists $\< L, \bnd\> \in I(t)(v)$ then there must exist $\<L, \bnd'\> \in I'(t)(v)$ s.t.\ $\bnd'\subseteq \bnd$.
\end{definition}

Next, we define an equivalence relation for interpretations denoted with $\sim$; we will use the notation $[I]$ for the set of all interpretations equivalent to $I$ w.r.t.~$\sim$.  This allows us to define a partial ordering.

\begin{definition}
Two interpretations $I, I'$ are \textbf{equivalent} (written $I \sim I'$) iff for all $P \in \mathbf{P}$, $I \models P$ iff $I' \models P$.
\end{definition}

\begin{definition}[Partial Ordering]
Given classes of interpretations $[I],[I']$ that are equivalent w.r.t.~$\sim$, we say that $[I]$ precedes $[I']$, written $[I] \sqsubseteq [I']$,
iff $I \sqsubseteq^{pre} I'$.
\end{definition}

The partial ordering is clearly reflexive, antisymmetric, and transitive.  Note that we will often use $I \sqsubseteq I'$ as shorthand for $[I] \sqsubseteq [I']$.
We define two special interpretations, $\bot$ and $\top$, such that $\forall t\in\tau,c\in\calg$, $\bot(t)(c) = \emptyset$ and there exists network atom $\<L,\emptyset\> \in \top(t)(c)$.
Clearly, no other interpretation can be below $\bot$ as the $[\ell,u]$ bound on all network atoms for each time step and each component is $[0,1]$; similarly, no other interpretation is above $\top$, since for any interpretation $I$ for which there exists $\<L,\bnd \> \in I(t)(c)$ where $\bnd \neq \emptyset$, we have $\emptyset\subseteq \bnd$.  We can prove (see the full version of the paper for details) that with $\top$ and $\bot$, $\<\cali,\sqsubseteq \>$ is a complete lattice.  We can now arrive at the notion of \textit{minimal model} for a $\mancalog$ program.

\begin{definition}[Minimal Model]
Given program $P$, the minimal model of $P$ is a (canonical) interpretation $I$ s.t.\ $I \models P$ and for all (canonical) interpretation $I'$ s.t.\ $I' \models P$, 
we have that $I \sqsubseteq I'$.
\end{definition}

Suppose we have some algorithm $A$ that takes as input a program $P$ and returns an interpretation $I$ (where $I$ does not necessarily satisfy $P$) s.t.\ for all $I'$ where $I' \models P$, $I \sqsubseteq I'$.  It is easy to show that if $A(P) \models P$ then $P$ is consistent.  Likewise, if $A(P)= \top$ then $P$ is inconsistent, as all models must then have a tighter weight bound for the network atoms than an invalid interpretation (hence, making such an interpretation invalid as well).  Clearly, any such algorithm $A$ would provide a sound and complete answer to the consistency problem.  Likewise, if we consider the entailment problem, it is easy to show that for fact $F=(\langle L, \bnd\rangle,c):[t_1,t_2]$, $P$ (canonically) entails $F$ iff the minimal model of $P$ (canonically) satisfies $F$.  This is because for minimal model $A(P)$ of $P$, for any time $t \in [t_1,t_2]$, if $A(P)(t)(c) \models \< L, \bnd \>$ then there is network atom $\< L, \bnd' \> \in A(P)(t)(c)$ s.t. $\bnd'\subseteq \bnd$.
We note that for any other interpretation $I$ of $P$ with $\< L, \bnd''\> \in I(t)(c)$ we have that $\bnd' \supseteq \bnd''$.  Hence, having a minimal model allows us to solve any entailment query.  We can think of a minimal model of a $\mancalog$ program as the outcome of a diffusion process in a multi-agent system modeled as a complex network.  Hence, a question such as ``how many agents will adopt the product with a weight of at least $0.9$ in two months?'' can be easily answered once the minimal model is obtained. 

\section{Fixed Point Model Computation}
\label{fpSec}


In this section we introduce a fixed-point operator that produces the non-canonical minimal model of a $\mancalog$ program in polynomial time.  This is followed by an algorithm to find a canonical minimal model also in polynomial time.  First, we introduce three preliminary definitions.

\begin{definition}
For a given $\mancalog$ program $P$, $c \in \calg$, $L \in \call$, and $t\in \tau$ we define function
$\FB(P,c,t,L) =$
\[
\bigcap_{(\<L,\bnd\>,c):[t_1,t_2]\in P\textit{ s.t. }t \in [t_1,t_2]}\bnd
\]
\end{definition}

\begin{definition}
\label{ibound}
For a given $\mancalog$ program $P$, $c \in \calg$, $L \in \call$, and $t\in \tau$ we define function
$\IB(P,c,t,L) =$
\[
\bigcap_{\<L, \bnd\> \hookleftarrow a \in P \textit{ s.t. }I(t)(c)\models a}\bnd
\]
\end{definition}

\begin{definition}
Given $\mancalog$ program $P$, interpretation $I$, $v \in V$, $L \in \call$, and $t\in \tau$, we define
$\RB(P,I,v,t,L) =$
\[
\bigcap_{r\in P\textit{ s.t. }t \in \mathit{TTS}(I,v,L,P)\cap\mathit{TTS}(I,v,r)} \BOUND(r,v,I(t-\Delta t))
\]
\end{definition}

We can now introduce the operator.

\begin{definition}[$\Gamma$ Operator]
For a given $\mancalog$ program $P$, we define the operator $\Gamma_P : \cali \rightarrow \cali$ as follows:
For a given $I$, for each $t \in \tau$, $c\in \calg$, and $L \in \call$, add $\<\call, \bnd\>$ to $\Gamma_P(I)(t)(c)$ where $\bnd$ is defined as:
\begin{eqnarray*}
\bnd &=& \bnd_{prv}\cap \FB(P,c,t,L)\cap\\
&& \IB(P,I,c,t,L)\cap \RB(P,I,c,t,L)
\end{eqnarray*}
where $\<L,\bnd_{prv}\> \in I(t)(c)$.
\end{definition}

It is easy to show that $\Gamma$ can be computed in polynomial time (the proof is in the full version).  Next, we introduce notation for repeated applications of $\Gamma$.

\begin{definition}[Iterated Applications of $\Gamma$]
Given natural number $i > 0$, interpretation $I$, and program $P$, we define $\Gamma_P^i(I)$, the multiple applications
of $\Gamma$, as follows:
\begin{equation*}
\Gamma^i_P(I) = \begin{cases} \Gamma_P(I ) & \text{if $i=1$}  \\ \Gamma_P(\Gamma^{i-1}_P(I)) & \text{otherwise} \end{cases}
\end{equation*}
\end{definition}
We can prove that the iterated $\Gamma$ operator converges after a polynomial number of applications:

\begin{theorem}
\label{gammaPolyConverge}
Given interpretation $I$ and program $P$, there exists a natural number $k$ s.t.\
$\Gamma_P^k(I) = \Gamma_P^{k+1}(I)$, and
\[
k \in O\Big(|P|\cdot\dinm \cdot \tm \cdot |E|\Big)
\]
where $\dinm$ is the maximum in-degree in the network.
\end{theorem}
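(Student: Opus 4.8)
The plan is to exploit the fact that $\Gamma_P$ only ever \emph{tightens} bounds, so the iterates form a non-decreasing chain in $\langle \cali,\sqsubseteq\rangle$, and then to bound the total number of strict tightenings that can occur before stabilisation. First I would observe that $\Gamma_P$ is inflationary: for every $t,c,L$ the new bound $\bnd = \bnd_{prv}\cap\FB\cap\IB\cap\RB \subseteq \bnd_{prv}$, so the atom $\<L,\bnd\>$ is at least as tight as the one already present, giving $I \sqsubseteq \Gamma_P(I)$ for all $I$. Consequently $\Gamma_P^{i}(I) \sqsubseteq \Gamma_P^{i+1}(I)$ for all $i$, and whenever $\Gamma_P^{i}(I)\neq\Gamma_P^{i+1}(I)$ at least one bound must shrink strictly. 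Hence the least $k$ with $\Gamma_P^{k}(I)=\Gamma_P^{k+1}(I)$ satisfies $k \le 1 + (\text{total number of strict tightenings over all } t,c,L \text{ and all iterations})$, and the whole problem reduces to bounding that count.

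The key structural observation is that all quantities driving the tightening are monotone along the chain. Rules and integrity constraints have \emph{fluent} heads, so a non-fluent label at any $(t,c,L)$ is affected only by $\FB$; since $\FB$ is constant, every non-fluent bound reaches $\bnd_{prv}\cap\FB$ after a single application and is constant thereafter. Because $\gnode,\gedge$ and the target criteria $f$ are non-fluent formulas, it follows that for all $i\ge 1$ the eligible sets $\ELIG(v,\gedge,\gnode,\cdot)$, the rule contributions to $\mathit{TTS}$, and the neighbour/edge tests are all \emph{fixed}. The remaining fluent-dependent tests---the qualifier formula $h$, the integrity-constraint bodies, and the fact/IC parts of $\mathit{TTS}$---are positive conjunctions, and satisfaction of a positive atom $\<L,\bnd\>$ is preserved when $\bnd$ shrinks; hence along the non-decreasing chain the qualifying sets $\QUAL$, the set of triggered integrity constraints, and the set of active $(r,v,t)$ triples only grow. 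Finally, with $|\ELIG(v,\cdot)|$ fixed, axiom~2 of influence functions ($x'>x \Rightarrow \ifl(x',y)\subseteq\ifl(x,y)$) guarantees that $\BOUND(r,v,\cdot)=\ifl(|\QUAL|,|\ELIG|)$ can only tighten as $|\QUAL|$ increases.

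It remains to count. Abbreviating the eligible count of rule $r$ at $v$ by $\ELIG_v$, for a fixed $(r,v,t)$ with $t\ge 1$ the value of $\BOUND(r,v,I(t-\Delta t))$ is determined by $|\QUAL|\in\{0,\dots,|\ELIG_v|\}$, which is non-decreasing, so this contribution can change at most $|\ELIG_v|$ times over the whole run. Summing the rule contributions,
\[
\sum_{r\in P}\sum_{t\in\tau}\sum_{v\in V} |\ELIG_v| \;\le\; |P|\cdot(\tm+1)\cdot\sum_{v\in V}\degin(v) \;=\; |P|\cdot(\tm+1)\cdot|E|,
\]
using $\sum_{v}\degin(v)=|E|$ and $|\ELIG_v|\le\degin(v)\le\dinm$. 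Each integrity constraint, having a fluent head, can fire at most once per $(t,c)$ by monotonicity, contributing $O(|P|\cdot\tm\cdot|\calg|)$ tightenings, and each fact contributes at most one tightening per triple; both are absorbed into $O(|P|\cdot\dinm\cdot\tm\cdot|E|)$. Combining with the reduction of the first paragraph yields $k\in O(|P|\cdot\dinm\cdot\tm\cdot|E|)$, as claimed.

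The step I expect to be the main obstacle is the monotonicity argument of the second paragraph: one must verify the clean dichotomy that every negation-bearing subformula ($\gnode,\gedge,f$) refers only to non-fluent labels---which stabilise after one application---while every subformula whose truth can still change after stabilisation ($h$ and the integrity-constraint bodies) is a positive conjunction. This separation is exactly what makes $|\QUAL|$ and the active sets monotone, and hence forces each $\BOUND(r,v,\cdot)$ to range over at most $|\ELIG_v|+1$ values; without it the iterates need not form a chain and the count could blow up. Once monotonicity is secured the counting is routine, the crucial simplification being $\sum_v\degin(v)=|E|$, which is where the edge factor in the bound originates.
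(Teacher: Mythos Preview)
Your argument is correct and shares the paper's overall strategy: $\Gamma_P$ is inflationary, so each non-fixing application forces at least one strict tightening at some $(t,c,L)$, and one bounds $k$ by the total number of such tightenings, attributing them rule by rule. Where you diverge is in the per-rule count. The paper does not use your fluent/non-fluent dichotomy or the monotonicity of $|\QUAL|$ at all; it simply observes that $\BOUND(r,v,\cdot)=\ifl(|\QUAL|,|\ELIG|)$ is determined by a pair of integers in $\{0,\dots,\dini\}\times\{0,\dots,\dinm\}$, hence can assume at most $(\dini+1)(\dinm+1)$ distinct values, and the running intersection with a finite value set can strictly shrink at most once per value. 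Summing $(\dini+1)(\dinm+1)$ over rules, time points, and vertices is what produces the $\dinm$ factor in the stated bound. Your route instead exploits that $\gnode,\gedge,f$ are non-fluent (so $|\ELIG|$ and the rule part of $\mathit{TTS}$ freeze after one step) and that $h$ and IC bodies are positive conjunctions (so $|\QUAL|$ is non-decreasing), then invokes Axiom~2 to get only $|\ELIG_v|\le\dini$ changes per $(r,v,t)$; summing via $\sum_v\dini=|E|$ yields the sharper $O(|P|\cdot\tm\cdot|E|)$ for the rule contribution, which you then relax to the stated bound. Both arguments are valid; yours is more structural and actually tighter, while the paper's is shorter and remains sound even if one allows negation inside $h$ (as the paper's own example $R_3$ in fact does, in tension with Definition~2.6), since it needs only the finite range of the pair, not monotonicity.
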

\begin{proof}[(sketch)]
For a given vertex $i \in V$, we will use the notation $\dini$ to denote the number of incoming neighbors (of any edge type).  First note that for a given $t \in \tau, i \in V,$ and $L \in \call$, a given rule $r$ can tighten the bound on a network atom formed with $L$ no more than $(\dini+1)\cdot (\dinm+1)$ times.  At each application of $\Gamma$, at least one network atom must tighten.  Hence, as there are only  $O\Big(|P|\cdot\dinm \cdot \tm \cdot |E|\Big)$ tightenings possible, this is also the bound on the number of applications of $\Gamma$.
\end{proof}
In the following, we will use the notation $\Gamma^*_P$ to denote the iterated application of $\Gamma$ after a number of steps sufficient for convergence; Theorem~\ref{gammaPolyConverge} means that we can efficiently compute $\Gamma^*_P$.  We also note that as a single application of $\Gamma$ can be computed in polynomial time, this implies that we can find a minimal model of a $\mancalog$ program in polynomial time.  We now prove the correctness of the operator.  We do this first by proving a key lemma that, when combined with a claim showing that for consistent program $P$, $\Gamma^*_P$ is a model of $P$, tells us that $\Gamma^*_P$ is a minimal model for $P$.  Following directly from this, we have that $P$ is inconsistent iff $\Gamma^*_P=\top$.

\begin{lemma}
\label{boundLemma}
If $I \models P$ and $I' \sqsubseteq I$ then $\Gamma(I') \sqsubseteq I$.
\end{lemma}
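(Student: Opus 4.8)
The plan is to prove the pre-order statement pointwise. Fix $t$, $c$, and $L$, let $\bnd_I$ be the bound assigned to $L$ by $I(t)(c)$ (taken to be $[0,1]$ when no atom for $L$ is present), and let $\bnd_\Gamma$ be the bound assigned by $\Gamma(I')(t)(c)$. Since $\sqsubseteq$ is $\sqsubseteq^{pre}$ on representatives, and the operator builds $\bnd_\Gamma$ as the intersection $\bnd_{prv}\cap\FB(P,c,t,L)\cap\IB(P,I',c,t,L)\cap\RB(P,I',c,t,L)$, it suffices to show $\bnd_I\subseteq\bnd_\Gamma$, and for that it is enough to show that $\bnd_I$ is contained in each of the four conjuncts. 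This reduces the lemma to four containment checks, three of which are routine and one of which carries the real content.

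For the first three conjuncts I would argue as follows. The conjunct $\bnd_{prv}$ is exactly the bound $I'$ assigns to $L$ at $(t,c)$, so $\bnd_I\subseteq\bnd_{prv}$ is immediate from $I'\sqsubseteq I$. For $\FB$, every fact $(\<L,\bnd\>,c):[t_1,t_2]\in P$ with $t\in[t_1,t_2]$ is satisfied (strictly, for non-fluent facts) by the model $I$, hence $\bnd_I\subseteq\bnd$; intersecting over all such facts gives $\bnd_I\subseteq\FB(P,c,t,L)$. For $\IB$, I first record the key monotonicity fact: if a world $W$ is at least as tight as a world $W'$ at every label and $W'\models g$ for a \emph{positive} conjunction of network atoms $g$, then $W\models g$ (check a single atom $\<M,\bnd\>$: $W'\models$ means its bound is $\subseteq\bnd$, and $W$'s bound is $\subseteq$ that, so $W\models$; then take conjunctions). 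Since $I'\sqsubseteq I$, each integrity constraint $\<L,\bnd\>\hookleftarrow b$ whose body fires under $I'$ (i.e.\ $I'(t)(c)\models b$) also fires under $I$, and because $I\models P$ satisfies the constraint we get $I(t)(c)\models\<L,\bnd\>$, i.e.\ $\bnd_I\subseteq\bnd$; intersecting yields $\bnd_I\subseteq\IB(P,I',c,t,L)$.

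The substantive conjunct is $\RB$, and only for $c=v\in V$ (edges are unconstrained by rules, giving $[0,1]$ trivially). Fix a rule $r=L\tarrow f,(\gedge,\gnode,h)_\ifl$ that fires under $I'$ at $(v,t)$, so it contributes $\BOUND(r,v,I'(t-\Delta t))$ to the intersection. I would establish $\bnd_I\subseteq\BOUND(r,v,I'(t-\Delta t))$ in two moves. \emph{First}, I claim $\ELIG(v,\gedge,\gnode,I(t-\Delta t))=\ELIG(v,\gedge,\gnode,I'(t-\Delta t))$ and $\QUAL(v,\gedge,\gnode,h,I'(t-\Delta t))\subseteq\QUAL(v,\gedge,\gnode,h,I(t-\Delta t))$, the latter using the monotonicity fact above on the positive conjunction $h$ together with the eligibility agreement. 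Given these, axiom~2 of the influence function (tightening as the first argument grows, with the second held fixed) gives $\BOUND(r,v,I(t-\Delta t))\subseteq\BOUND(r,v,I'(t-\Delta t))$. \emph{Second}, I show $t\in\mathit{TTS}(I,v,r)$, i.e.\ $I(t-\Delta t)(v)\models f$; then rule satisfaction of $I$ yields $\bnd_I\subseteq\BOUND(r,v,I(t-\Delta t))$, and chaining the two inclusions gives $\bnd_I\subseteq\BOUND(r,v,I'(t-\Delta t))$. Intersecting over all rules firing under $I'$ gives $\bnd_I\subseteq\RB(P,I',v,t,L)$, completing the four checks.

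The hard part is the non-monotonic behaviour of the criteria $f,\gnode,\gedge$: these are non-fluent \emph{formulas} and may contain negation, and under negation passing from the looser $I'$ to the tighter $I$ need not preserve satisfaction, so neither eligibility agreement nor $t\in\mathit{TTS}(I,v,r)$ is automatic. My plan to overcome this is to exploit that these formulas mention only non-fluent labels, and that non-fluent labels are never touched by the $\RB$ or $\IB$ components of $\Gamma$ (rule and IC heads are fluent) and are pinned by the strictly-satisfied non-fluent facts common to $I$ and the $I'$ in question. Concretely, I would argue that $I$ and $I'$ assign identical bounds to every non-fluent label at each $(t-\Delta t,\cdot)$, so that $f$, $\gnode$, and $\gedge$ evaluate identically under $I(t-\Delta t)$ and $I'(t-\Delta t)$ regardless of their negations; this simultaneously delivers the eligibility equality and the target-time membership. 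This agreement on non-fluent labels is the delicate step, and the place where the interaction between negation, the $\sqsubseteq$ pre-order, and strict satisfaction of non-fluent facts must be pinned down carefully; everything else is the standard ``$\Gamma$ stays below any model'' monotonicity argument, and once the lemma is in hand it combines with the claim that $\Gamma^*_P\models P$ to show that $\Gamma^*_P$ is the minimal model.
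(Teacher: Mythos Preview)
Your approach mirrors the paper's almost exactly: both discharge the $\bnd_{prv}$, $\FB$, and $\IB$ conjuncts routinely, then for $\RB$ argue (i) the eligible sets under $I$ and $I'$ coincide because eligibility depends only on non-fluent formulas, (ii) $\QUAL$ under $I'$ is contained in $\QUAL$ under $I$ by monotonicity of the positive conjunction $h$, and (iii) hence by Axiom~2 $\BOUND(r,v,I(t-\Delta t))\subseteq\BOUND(r,v,I'(t-\Delta t))$, after which rule-satisfaction of $I$ finishes the job. The paper runs this by contradiction; you run it directly --- a cosmetic difference.

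Where you go further than the paper is in flagging the negation issue: $f$, $\gnode$, $\gedge$ are non-fluent \emph{formulas}, possibly containing negation, so passing from the looser $I'$ to the tighter $I$ need not preserve their truth values. The paper simply asserts the eligibility equality (citing ``no rule can modify a non-fluent atom'') and never explicitly establishes $t\in\mathit{TTS}(I,v,r)$. Your proposed fix --- that $I$ and $I'$ agree on all non-fluent bounds --- is the right idea, but it does not hold for \emph{arbitrary} $I'\sqsubseteq I$ as the lemma is stated. Take $I'=\bot$, whose non-fluent bounds are all $[0,1]$, against a model $I$ that strictly satisfies a fact $(\<M,[1,1]\>,v):[0,\tm]$; a rule with target criterion $f=\neg\<M,[1,1]\>$ and a constant influence function then fires under $\bot$ at $v$ but not under $I$, and can force $\Gamma(\bot)(t)(v)$ to carry a strictly tighter bound on $L$ than $I(t)(v)$ does. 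So your claimed agreement fails exactly at the base case where the lemma is first applied in Theorem~\ref{minModelGamma}. The paper's proof has the same gap; a clean repair is to add the hypothesis that $I'$ already strictly satisfies the non-fluent facts of $P$ (true for $\Gamma^k_P(\bot)$ once $k\ge 1$, since $\FB$ installs them on the first pass), or to treat the first application $\Gamma_P(\bot)$ separately. With that caveat, your argument is correct and coincides with the paper's intended one.
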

\begin{theorem}
\label{minModelGamma}
If program $P$ is consistent then $\Gamma^*_P$ is a minimal model for $P$.
\end{theorem}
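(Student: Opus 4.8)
The plan is to combine two facts: that $\Gamma^*_P$ sits below every model of $P$, and that $\Gamma^*_P$ is itself a model of $P$ whenever $P$ is consistent. Together these say $\Gamma^*_P$ is a model that precedes all models, which is exactly the definition of minimal model. Throughout I read $\Gamma^*_P$ as the fixed point obtained by iterating $\Gamma_P$ from the bottom interpretation $\bot$; Theorem~\ref{gammaPolyConverge} guarantees that this fixed point is reached after polynomially many steps, so $\Gamma^*_P$ is well defined.

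First I would establish the lower-bound half using Lemma~\ref{boundLemma}. Fix an arbitrary model $M$ of $P$, so $M \models P$, and prove by induction on $i$ that $\Gamma^i_P(\bot) \sqsubseteq M$. The base case $\bot \sqsubseteq M$ is immediate since $\bot$ is the least element of the complete lattice $\langle \cali, \sqsubseteq \rangle$. For the inductive step, assume $\Gamma^i_P(\bot) \sqsubseteq M$ and apply Lemma~\ref{boundLemma} with $M$ playing the role of the model and $\Gamma^i_P(\bot)$ playing the role of the interpretation below it; this yields $\Gamma^{i+1}_P(\bot) = \Gamma_P(\Gamma^i_P(\bot)) \sqsubseteq M$. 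Evaluating at the convergence index $k$ gives $\Gamma^*_P = \Gamma^k_P(\bot) \sqsubseteq M$, and since $M$ was an arbitrary model, $\Gamma^*_P$ precedes every model of $P$.

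Next I would argue that $\Gamma^*_P$ is a model of $P$ — this is the claim the surrounding text refers to. Because $\Gamma^*_P$ is a fixed point of $\Gamma_P$, for every $t \in \tau$, $c \in \calg$, $L \in \call$ the recorded bound $\bnd_{prv}$ satisfies $\bnd_{prv} = \bnd_{prv} \cap \FB(P,c,t,L) \cap \IB(P,\Gamma^*_P,c,t,L) \cap \RB(P,\Gamma^*_P,c,t,L)$, so $\bnd_{prv}$ is contained in each of the three intersected bounds. Containment in $\FB$ and $\IB$ gives satisfaction of the fluent facts and integrity constraints directly; containment in $\RB$ gives, for each rule $r$ with head $L$ and each $t \in \mathit{TTS}(\Gamma^*_P,v,r)$, that $\bnd_{prv} \subseteq \BOUND(r,v,\Gamma^*_P(t-\Delta t))$, which is precisely rule satisfaction. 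Strict satisfaction of non-fluent facts and the clause requiring $\langle L,[0,1]\rangle$ at time points outside $\mathit{TTS}$ follow because $\Gamma_P$ leaves such bounds untouched and they begin at $[0,1]$ in $\bot$. Consistency enters exactly here: since $P$ has a model $M$ and $\Gamma^*_P \sqsubseteq M$, the interpretation $\Gamma^*_P$ cannot equal $\top$, so no bound has collapsed to $\emptyset$ and the satisfaction conditions above genuinely hold.

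Combining the two halves, $\Gamma^*_P \models P$ and $\Gamma^*_P \sqsubseteq I'$ for every $I'$ with $I' \models P$, which is the definition of minimal model. The hard part will be the second half — verifying model-hood at the fixed point — because both $\mathit{TTS}$ and $\BOUND$ are computed against the interpretation being evaluated, so one must check that these circular references are mutually consistent at the fixed point; in particular, since $\mathit{TTS}$ grows monotonically as bounds tighten, care is needed to ensure no required time point is left outside $\mathit{TTS}(\Gamma^*_P,\cdot)$ after its bound has already been narrowed. The fixed-point equation $\Gamma_P(\Gamma^*_P) = \Gamma^*_P$ is what forces this consistency, and the existence of a model (i.e., consistency of $P$) is what rules out the degenerate outcome $\Gamma^*_P = \top$.
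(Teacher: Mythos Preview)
Your approach is essentially the paper's: prove (i) $\Gamma^*_P \models P$ via the fixed-point equation, and (ii) $\Gamma^*_P \sqsubseteq M$ for every model $M$ by inducting Lemma~\ref{boundLemma} up from $\bot$; the paper does the same two steps in the opposite order and with less detail. One small slip: your justification ``$\Gamma_P$ leaves such bounds untouched and they begin at $[0,1]$'' is the right reason for the $\langle L,[0,1]\rangle$-outside-$\mathit{TTS}$ clause but the wrong reason for strict satisfaction of non-fluent facts---for the latter, the point is that $\FB$ tightens the bound to exactly the fact's $\bnd$ (the program has at most one non-fluent fact per $(L,c)$), and no rule or IC can further tighten it since their heads are fluent.
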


\noindent
These results, when taken together, prove that tight entailment and consistency problems for $\mancalog$ can be solved in polynomial time, which is precisely what we set out to
accomplish as part of our desiderata described in Section~\ref{sec:criteria}.  Next, we develop an algorithm for the {\em canonical} versions of consistency and tight entailment, and show that we can bound the running time of the algorithm with a polynomial. We also note that subsequent runs of the convergence of $\Gamma$ will likely complete quicker in practice, as the initial interpretation is the last interpretation calculated (cf.\ line~\ref{recalcLine}).  We also show that the interpretation produced by the algorithm is a canonical minimal model.  Following from that, a program is inconsistent iff the algorithm returns $\top$.

\algsetup{indent=1em}
	\begin{algorithm}[t!]
		\caption{ \textsf{CANON\_PROC}}
		\begin{algorithmic}[1]
\small
		\REQUIRE Program $P$
		\ENSURE Interpretation $I$
		\medskip

		\STATE{ $cur\_interp = \Gamma^*_P(\bot)$;}
		\STATE{ Initialize matrix array $cur\_free[\cdot][\cdot]$ where for $v\in V,$ and $L\in \call$, $cur\_free[v][L] = \tau-\mathit{TTS}(cur\_interp,v,L,P)-\{0\}$;}
		\STATE{ Initialize array $vl\_pr[\cdot]$ where for each $t \in [1,\tm]$, $vl\_pr[t]=\{ (v,L) \; | \; t \in cur\_free[v][L] \}$; }
				\FOR{$t=1,\ldots,\tm$}\label{bigFor}
					\IF{$vl\_pr[t] \neq \emptyset$}\label{condLine}
						\FOR{$(v,L) \in vl\_pr[t]$}\label{vlLoop}\label{insideForBegin}
							\STATE{ Remove $\<L,bnd\>$ from $I(t)(v)$; }
							\STATE{ Let $a$ be the atom in $I(t-1)(v)$ of the form $\<L,bnd'\>$; }
							\STATE{ Add $a$ to $I(t)(v)$; }
						\ENDFOR\label{insideForEnd}
						\STATE{Set $cur\_interp = \Gamma^*_P(cur\_interp)$;}\label{recalcLine}
					\ENDIF
					\STATE{For $v\in V$, and $L\in \call$, $cur\_free[v][L] = \tau-\mathit{TTS}(cur\_interp,v,L,P)-\{0,\ldots,t\}$ }\label{newCurFree}
					\STATE{For each $t \in [t+1,\tm]$, $vl\_pr[t]=\{ (v,L) \; | \; t \in cur\_free[v][L] \}$ }
				\ENDFOR
				\RETURN $I$
		\end{algorithmic}
\end{algorithm}

\begin{proposition}
Algorithm \textsf{CANON\_PROC} performs no more than $1+\tm\cdot \min(|\call|,|P|)\cdot |V|$ calculations of the convergence of $\Gamma$.
\end{proposition}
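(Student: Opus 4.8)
The plan is to count the two statements of \textsf{CANON\_PROC} at which a full convergence $\Gamma^*_P$ is computed and to bound how often each is reached. These are line~1, which sets $cur\_interp = \Gamma^*_P(\bot)$, and line~\ref{recalcLine} inside the main loop. Line~1 executes exactly once, contributing a single calculation, so the remaining task is to bound the executions of line~\ref{recalcLine}.

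First I would note that line~\ref{recalcLine} lies within the loop over $t=1,\ldots,\tm$ (line~\ref{bigFor}) and is reached only when $vl\_pr[t]\neq\emptyset$, after the inner loop (lines~\ref{insideForBegin}--\ref{insideForEnd}) has processed the pairs of $vl\_pr[t]$. Charging one recomputation to each processed pair $(v,L)\in vl\_pr[t]$ gives the upper bound $\sum_{t=1}^{\tm}|vl\_pr[t]|$ on the number of executions of line~\ref{recalcLine} (this is conservative, since in fact line~\ref{recalcLine} fires at most once per $t$); it therefore suffices to show $|vl\_pr[t]|\le\min(|\call|,|P|)\cdot|V|$ for each $t$.

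Next I would bound $|vl\_pr[t]|$. By construction $vl\_pr[t]$ contains exactly the pairs $(v,L)$ for which $t$ lies in $cur\_free[v][L]$, i.e.\ for which $t\notin\mathit{TTS}(cur\_interp,v,L,P)$; there are at most $|V|$ choices for $v$. For the label component, the key observation is that a label $L$ can be constrained, and hence genuinely reset by the inner loop, only when its target time set is nonempty, which happens precisely when $L$ occurs as the head of a rule, inside a fact, or as the head of an integrity constraint of $P$. Each such occurrence is a distinct syntactic element of $P$, so at most $|P|$ labels arise this way, and trivially at most $|\call|$; any label absent from $P$ keeps the trivial bound $[0,1]$ at every node and time step, so its reset is vacuous and forces no fresh convergence. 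Hence at most $\min(|\call|,|P|)$ labels contribute, giving $|vl\_pr[t]|\le\min(|\call|,|P|)\cdot|V|$.

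Combining the two counts, the total number of convergence computations is at most $1+\sum_{t=1}^{\tm}|vl\_pr[t]|\le 1+\tm\cdot\min(|\call|,|P|)\cdot|V|$, which is the claimed bound. The hard part is the per-step label bound: one must argue that labels not occurring in $P$ never trigger a recomputation, so that the count is governed by $\min(|\call|,|P|)$ rather than by $|\call|$ alone. This rests on the fact that such labels stay at $[0,1]$ throughout the run, making their canonical reset in the inner loop a no-op that leaves $cur\_interp$ unchanged and hence demands no new application of $\Gamma^*_P$.
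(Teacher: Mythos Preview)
The paper states this proposition without proof, so there is no reference argument to compare against directly. Evaluating your proposal on its own merits, there is a genuine gap in the step that bounds $|vl\_pr[t]|$ by $|P|\cdot|V|$. For a label $L$ that occurs nowhere in $P$ (not as a rule head, not in a fact, not as an IC head), the set $\mathit{TTS}(cur\_interp,v,L,P)$ is empty for every $v$, so $cur\_free[v][L]=\tau\setminus\{0\}=\{1,\ldots,\tm\}$ and the pair $(v,L)$ belongs to $vl\_pr[t]$ for every $t\ge 1$. Thus labels absent from $P$ appear \emph{more}, not less, in $vl\_pr[t]$, and in general $|vl\_pr[t]|$ can equal $|\call|\cdot|V|$ even when $|P|$ is small. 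Your salvage---that such resets are no-ops and therefore ``demand no new application of $\Gamma^*_P$''---does not match the algorithm: line~\ref{recalcLine} sits inside the \textsf{if} at line~\ref{condLine} but \emph{outside} the inner \textsf{for}, and it executes whenever $vl\_pr[t]\neq\emptyset$, irrespective of whether the inner loop actually changed $cur\_interp$. The algorithm does not test for a change before recomputing.

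The clean route is the one you mention parenthetically and then discard: since line~\ref{recalcLine} fires at most once per iteration of the outer loop at line~\ref{bigFor}, the number of convergence computations is at most $1+\tm$, and this is already no larger than $1+\tm\cdot\min(|\call|,|P|)\cdot|V|$ in any nontrivial instance (i.e., whenever $\min(|\call|,|P|)\cdot|V|\ge 1$). The detour through $\sum_{t}|vl\_pr[t]|$ and the per-label accounting is both unnecessary and, as argued above, does not deliver the $|P|$ factor you need.
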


\begin{theorem}
\label{canonSound}
If $P$ is consistent, then $\textsf{CANON\_PROC}(P)$ is the minimal canonical model of $P$.
\end{theorem}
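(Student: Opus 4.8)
The plan is to establish two things about $I^{*} = \textsf{CANON\_PROC}(P)$: that it is a canonical model of $P$, and that it lies $\sqsubseteq$-below every canonical model of $P$. Throughout I would lean on three facts already available. First, $\Gamma_{P}$ is monotone and $\Gamma^{*}_{P}(\bot)$ is the (non-canonical) minimal model of $P$ (Theorem~\ref{minModelGamma}). Second, every canonical model $J$ is itself a fixed point of $\Gamma_{P}$: for $t \in \mathit{TTS}(J,c,L,P)$ the bounds $\FB,\IB,\RB$ reproduce $J$ because $J$ satisfies the facts, constraints, and rules, while for $t \notin \mathit{TTS}$ all three are $[0,1]$, so $\Gamma_{P}$ returns $\bnd_{prv}$ unchanged. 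Third, and most importantly, $\Gamma_{P}(I)(t)(c)$ depends only on the restriction of $I$ to time points $\le t$, since $\FB$ is constant, $\IB$ reads $I(t)$, and $\RB$ reads $I(t)$ and $I(t-\Delta t)$ with $\Delta t \ge 1$. This ``forward in time'' property is what lets the outer loop treat earlier time points as permanently settled.

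For the canonical-model half I would argue by a loop invariant: after the iteration for time $t$, the interpretation $cur\_interp$ is a fixed point of $\Gamma_{P}$ (hence satisfies all rules, integrity constraints, and fluent facts, and strictly satisfies the non-fluent facts, exactly as $\Gamma^{*}_{P}$ guarantees), and in addition the canonical carry-over condition holds at every time point $\le t$. The carry-over at time $t$ is installed by construction in the inner loop, and it survives the subsequent $\Gamma^{*}_{P}$ recomputation precisely because $\Gamma_{P}$ does not alter a bound at a point outside $\mathit{TTS}$. By the forward-in-time property, nothing done in later iterations can disturb the bounds at times $\le t$, so the invariant is preserved to the end, and at termination $I^{*}$ satisfies every clause of the canonical-model definition. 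Termination within the bounded number of $\Gamma^{*}_{P}$ calls is given by the preceding Proposition together with Theorem~\ref{gammaPolyConverge}.

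For minimality I would carry the invariant $cur\_interp \sqsubseteq J$ for an arbitrary canonical model $J$. The base case $\Gamma^{*}_{P}(\bot) \sqsubseteq J$ follows from $\bot \sqsubseteq J$, the fact that $J$ is a $\Gamma_{P}$-fixed point, and (the proof of) Lemma~\ref{boundLemma}, which uses only monotonicity and that $J$ is fixed. For the inductive step, when a pair $(v,L)$ is overwritten at time $t$ with its value at $t-1$, I would use that at $t-1$ we already have $cur\_interp(t-1)(v)[L] \supseteq J(t-1)(v)[L]$; if $t \notin \mathit{TTS}(J,v,L,P)$ then $J$'s own carry-over gives $J(t)(v)[L]=J(t-1)(v)[L]$, so the overwrite keeps $cur\_interp \sqsubseteq J$ at this point, after which Lemma~\ref{boundLemma} shows the $\Gamma^{*}_{P}$ recomputation preserves $cur\_interp \sqsubseteq J$.

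The hard part is exactly the case the inductive step leaves open: a pair $(v,L)$ can be free with respect to $cur\_interp$ (so the algorithm carries it over) while lying in $\mathit{TTS}(J,v,L,P)$ because an integrity constraint with a fluent body fires in the tighter model $J$ but has not yet fired in the looser $cur\_interp$, which is possible since $\IB$ and the constraint part of $\mathit{TTS}$ read the time-$t$ bounds that the two interpretations need not share. In that situation the carried-over bound $J(t-1)(v)[L]$ need not contain the constrained bound $J(t)(v)[L]$, and because $\Gamma_{P}$ always intersects with the current $\bnd_{prv}$, one must rule out the recomputation driving $cur\_interp(t)(v)[L]$ strictly below $J(t)(v)[L]$, or even to $\emptyset$. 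I expect to handle this by exploiting that within a single time step $\Gamma^{*}_{P}$ recomputes $\mathit{TTS}$ from the current bounds and re-fires such constraints, and by using the consistency of $P$ to show the two processes are forced to agree on which constraints fire at time $t$ before that step completes; this is the delicate point on which the whole argument turns. With that reconciliation in hand, the two invariants together give that $I^{*}$ is a canonical model satisfying $I^{*}\sqsubseteq J$ for every canonical model $J$, which is the claim.
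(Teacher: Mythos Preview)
Your two-claim structure---show the output is canonical, then carry the invariant $cur\_interp\sqsubseteq J$ for an arbitrary canonical model $J$ through each carry-over step and each $\Gamma^{*}_{P}$ recomputation---is exactly the paper's route. The paper's Claim~2 is your inductive step for minimality, and it too defers the recomputation to (the proof of) Lemma~\ref{boundLemma}, just as you do.

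Where you diverge is in the handling of what you call the hard case. The paper does not leave it open: its Claim~2 opens by asserting that
\[
\mathit{TTS}(J,c,L,P)=\mathit{TTS}(cur\_interp,c,L,P)=\mathit{TTS}(cur\_interp',c,L,P)
\]
for every $c,L$, justified by the fact that both interpretations strictly satisfy all non-fluent facts in $P$. Granting that equality, any triple $(v,L,t)$ free in $cur\_interp$ is automatically free in $J$, so $J$ itself carries over at $t$, and $cur\_interp'(t)(v)[L]=cur\_interp(t{-}1)(v)[L]\supseteq J(t{-}1)(v)[L]=J(t)(v)[L]$ follows straight from the hypothesis at $t{-}1$. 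No ``reconciliation of which constraints fire'' is attempted or needed.

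Your worry about integrity constraints with fluent conjuncts in the body is well-aimed against the stated syntax: the rule contribution to $\mathit{TTS}$ depends only on the non-fluent target criterion $f$, so strict satisfaction of non-fluent facts does pin that down, but the IC contribution $\{t:I(t)(c)\models b\}$ can in principle differ between $cur\_interp$ and $J$ when $b$ is fluent. The paper's justification for the TTS equality invokes only non-fluent facts and does not address this; in effect it argues as though IC bodies were non-fluent (its sole IC example has a non-fluent body). So you have not missed the paper's argument so much as noticed a point it glosses over. If you grant the TTS equality---or restrict to non-fluent IC bodies---your proof closes immediately along the paper's lines, and your proposed reconciliation step, which as written is too vague to assess, becomes unnecessary.
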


\section{Applications}
\label{sec:learn}

In this section, we will briefly discuss work in progress on how \mancalog\ can be applied
in real world settings.

It is widely acknowledged that modeling influence in multi-agent systems (most usefully modeled
as complex networks) is highly desirable for many practical problems as varied as viral marketing,
prevention of drug use, vaccination, and power plant failure. Though \mancalog\ programs are a
rich model to work with, the acquisition of rules is the principal hurdle to overcome; this is mainly
due to this richness of representation, since for each rule we must provide a set of conditions
on the agents being influenced, conditions on their neighbors and their ties to their neighbors, and
how capable these neighbors are of influencing them. A domain expert is likely able to provide important
insights into these components, but the best way to obtain these rules is undoubtedly to leverage
the presence of large amounts of data in domains like Twitter (with about 340M messages sent per day, available
through public APIs), Facebook (over 950M users with more complex information; not publicly available, but
data can be requested through apps), and blogging and photo hosting sites such as Blogger and Flickr
(which have millions of users as well).

Concretely, we have begun working towards this goal by extracting several time-series, multi-attribute network data sets on
which to apply $\mancalog$.
For instance, to study the proliferation of research on different topics, we looked at research on ``niacin'' indexed by
Thomson Reuters Web of Knowledge (\url{http://wokinfo.com}). This topic was chosen due to its interest to a variety of
disciplines, such as medicine, biology, and chemistry; this gives the data more variety compared with more
discipline-specific topics.
We extracted an author-paper bipartite network consisting of
$3,790$ papers with $10,465$ authors and $16,722$ edges (cf.\ Figure~\ref{fig:papers12}); from this data we can easily
focus on various kinds of networks (co-author, citation, etc.).  We have also collected attribute and time-series data for this network, as well as the subjects of the papers; the propagation of these subjects is a good starting point to test
methods for the acquisition of $\mancalog$ rules. We are harvesting larger datasets from various online social networks.
Further details can be found in the full version of the paper.

\smallskip
\noindent
{\bf A proposed learning architecture.}
We are currently developing a \mancalog\ learning architecture (depicted in Fig.~\ref{fig:architecture})
based on the use of state-of-the-art data analysis, clustering, and influence learning techniques as building blocks for the acquisition of \mancalog\ rules from data sets. The key question is not just the identification of the best
techniques to adopt, but how to adapt them and combine them in such a way as to produce meaningful and useful outputs.

Consider the diagram in Fig.~\ref{fig:architecture}: the data first flows from raw data sources to the {\em cluster identification} component, which has the goal of identifying
sets of agents behaving as groups (for instance, teens influencing other teens of the same sex in the consumption
of music, or scientists of a certain field influencing the research topics of others in a related field)~\cite{warren2005clustering,jain2010data}; the main output here is a set of conditions on nodes and edges that characterize groups of nodes.
Once clusters are identified, the {\em influence recognition} component will make use of both the clusters and
the data sources to recognize what kind of influence is present in the system~\cite{aral12,goyal2010learning,goyal2011data};
the main output of this component is
the influence function to be used in the \mancalog\ rules. The {\em rule generation} component then takes the output
of the cluster identification and influence recognition components, along with the raw data ({\em e.g.}, to analyze
time stamps) and produces \mancalog\ rules; the output of this component is involved in a refinement cycle with
experts who can provide feedback on the rules being produced (such as possible combinations of rules, identification of cases of overfitting, etc.).

\begin{figure}[t!]
  \centering
  \includegraphics[width=0.25\textwidth]{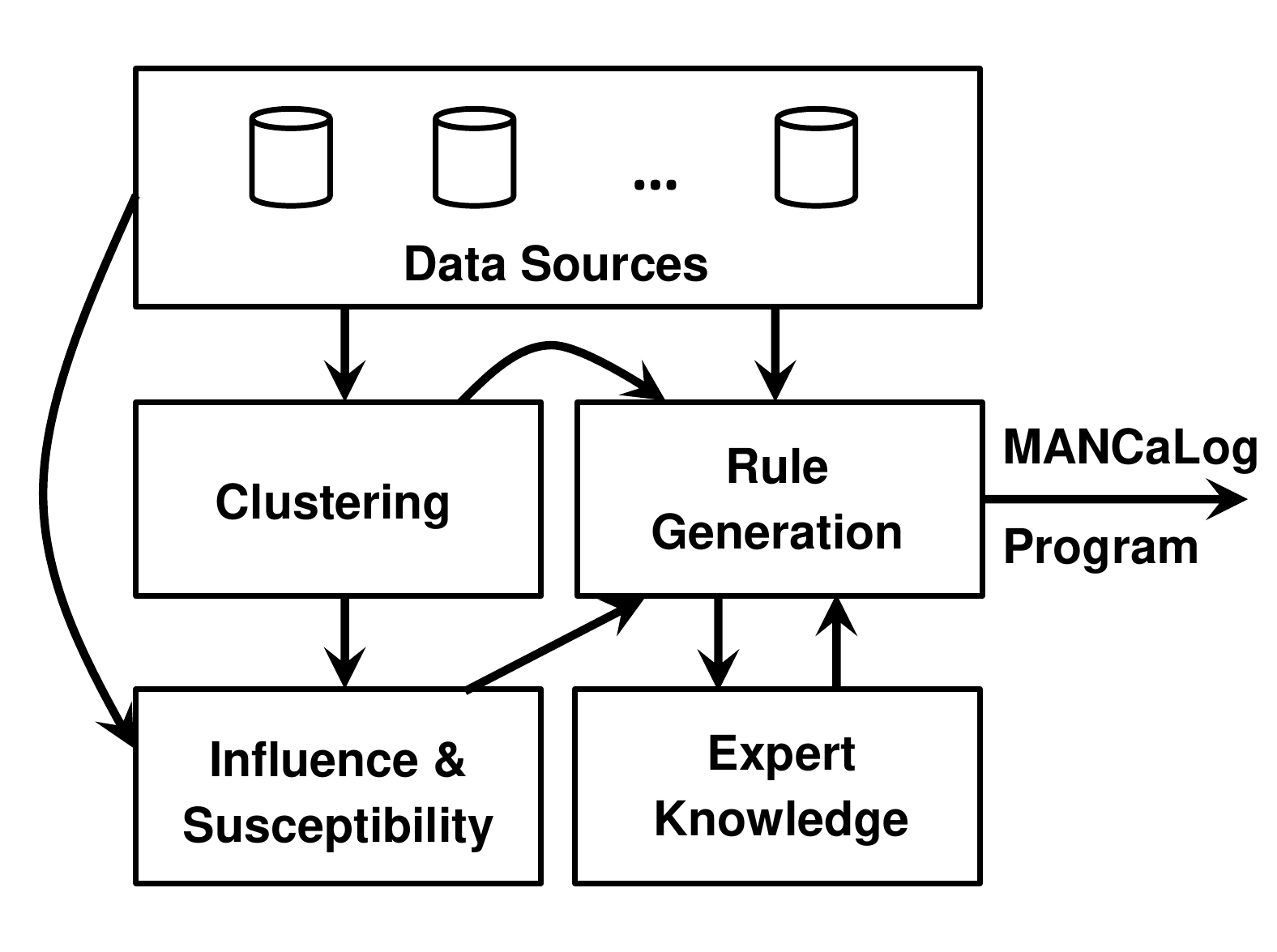}
  \caption{An architecture for obtaining \mancalog\ programs from available data sources.}
  \label{fig:architecture}
\end{figure}


\begin{figure}[t!]
  \centering
  \includegraphics[width=0.45\textwidth]{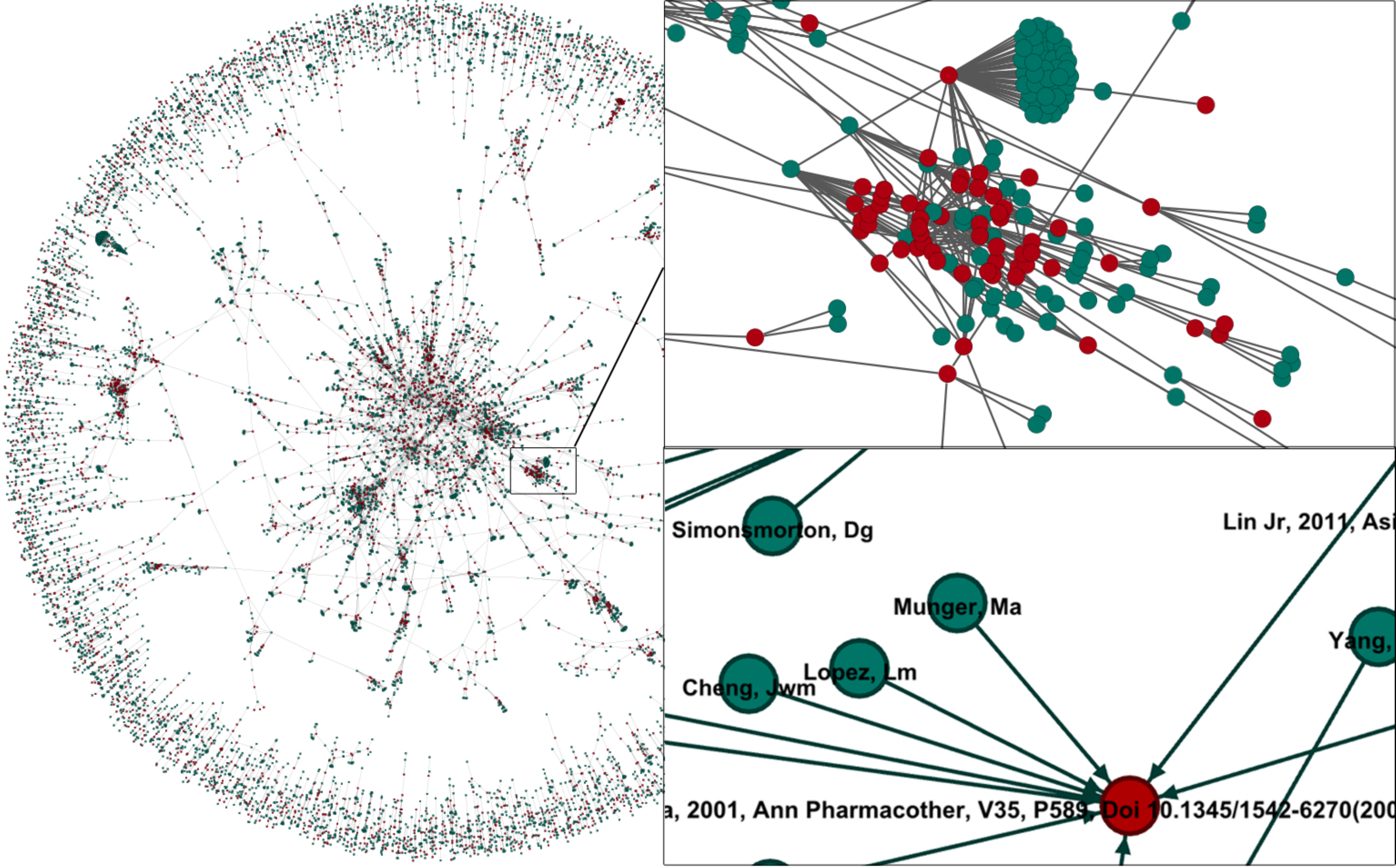}
	\caption{{\em (Left)} Visualization of a multi-attribute time-series author-paper network from 1952 to 2012. 
{\em (Top-Right)} Close-up of the data inside the small box in the main figure. 
{\em (Bottom-Right)} Close-up showing node attributes. In all cases, authors are colored green and 
papers are colored red.
Data extracted from Thomson Reuters Web of Knowledge.}
   \label{fig:papers12}
\end{figure}

\newpage

\section{Conclusion}
\label{sec:conc}

In this paper, we presented the $\mancalog$ language for modeling cascades in multi-agent systems organized in the form of complex networks.  We started by establishing seven criteria in the form of desiderata for such a formalism, and proved that $\mancalog$ meets all of them; to the best of our knowledge, this has not been accomplished by any previous model in the literature.  We also note that $\mancalog$ is the first language of its kind to consider network structure in the semantics, potentially opening the door for algorithms that leverage features of network topology in more efficiently answering queries.  Our current work involves implementing the algorithms described in this paper, as well as the real-world applications described in Section~\ref{sec:learn}; though our algorithms have polynomial time complexity, it is likely that further optimizations will be needed in practice to ensure scalability for very large data sets.

In the near future, we shall also explore various types of queries that have been studied in the literature, such as finding agents of maximum influence, identifying agents that cause a cascade to spread more quickly, and identifying agents that can be influenced in order to halt a cascade. 

\section{Acknowledgments}
P.S. is supported by the Army Research Office (project 2GDATXR042).  G.I.S. is supported under (UK) EPSRC grant EP/J008346/1 -- PrOQAW.  The opinions in this paper are those of the authors and do not necessarily reflect the opinions of the funders, the U.S. Military Academy, or the U.S. Army.

\pagebreak

\begin{thebibliography}{10}

\bibitem{aral12}
S.~Aral and D.~Walker.
\newblock {Identifying Influential and Susceptible Members of Social Networks}.
\newblock {\em Science}, 337(6092):337--341, 2012.

\bibitem{bss10}
M.~Broecheler, P.~Shakarian, and V.~Subrahmanian.
\newblock A scalable framework for modeling competitive diffusion in social
  networks.
\newblock In {\em Proc.\ of SocialCom}. IEEE, 2010.

\bibitem{chen10}
W.~Chen, C.~Wang, and Y.~Wang.
\newblock Scalable influence maximization for prevalent viral marketing in
  large-scale social networks.
\newblock In {\em Proc.\ of KDD '10}, pages 1029--1038. ACM, 2010.

\bibitem{goyal2008discovering}
A.~Goyal, F.~Bonchi, and L.~Lakshmanan.
\newblock Discovering leaders from community actions.
\newblock In {\em Proc.\ of CIKM}, pages 499--508. ACM, 2008.

\bibitem{goyal2010learning}
A.~Goyal, F.~Bonchi, and L.~Lakshmanan.
\newblock Learning influence probabilities in social networks.
\newblock In {\em Proc.\ of WSDM}, pages 241--250. ACM, 2010.

\bibitem{goyal2011data}
A.~Goyal, F.~Bonchi, and L.~Lakshmanan.
\newblock A data-based approach to social influence maximization.
\newblock {\em Proc.\ of VLDB}, 5(1):73--84, 2011.

\bibitem{granovetter1973}
M.~Granovetter.
\newblock {The Strength of Weak Ties}.
\newblock {\em The American Journal of Sociology}, 78(6):1360--1380, 1973.

\bibitem{Gran78}
M.~Granovetter.
\newblock Threshold models of collective behavior.
\newblock {\em The American Journal of Sociology}, 83(6):1420--1443, 1978.

\bibitem{jain2010data}
A.~Jain.
\newblock Data clustering: 50 years beyond {K}-means.
\newblock {\em Pattern Recognition Letters}, 31(8):651--666, 2010.

\bibitem{kleinberg}
D.~Kempe, J.~Kleinberg, and E.~Tardos.
\newblock Maximizing the spread of influence through a social network.
\newblock In {\em Proc.\ of KDD '03}, pages 137--146. ACM, 2003.

\bibitem{lieberman_evolutionary_2005}
E.~Lieberman, C.~Hauert, and M.~A. Nowak.
\newblock Evolutionary dynamics on graphs.
\newblock {\em Nature}, 433(7023):312--316, 2005.

\bibitem{schelling}
T.~C. Schelling.
\newblock {\em Micromotives and Macrobehavior}.
\newblock W.W. Norton and Co., 1978.

\bibitem{apt1}
P.~Shakarian, A.~Parker, G.~I. Simari, and V.~S. Subrahmanian.
\newblock Annotated probabilstic temporal logic.
\newblock {\em ACM Trans.\ on Comp.\ Logic}, 12(2), 2011.

\bibitem{snops}
P.~Shakarian, V.~Subrahmanian, and M.~L. Sapino.
\newblock {Using Generalized Annotated Programs to Solve Social Network
  Optimization Problems}.
\newblock In {\em Proc.\ of ICLP (tech.\ comm.)}, 2010.

\bibitem{sood08}
V.~Sood, T.~Antal, and S.~Redner.
\newblock Voter models on heterogeneous networks.
\newblock {\em Physical Review E}, 77(4):041121, 2008.

\bibitem{warren2005clustering}
T.~Warren~Liao.
\newblock Clustering of time series data--a survey.
\newblock {\em Pattern Recognition}, 38(11):1857--1874, 2005.

\end{thebibliography}


\newpage

\section{Appendix}
\subsection{Set of interpretations form a complete lattice}

With top interpretation $\top$ and bottom interpretation $\bot$, $\<\cali,\sqsubseteq \>$ is a complete lattice.
\begin{proof}
Let $\cali'$ be a subset of $\cali$.  We can create $\inf(\cali')$ as follows.  We build interpretation $I'$.
For each $t\in \tau, c\in\calg, L \in \call$, let $\ell_1$ be the least of the set
$\cup_{I \in \cali'}\{\ell | \<L, [\ell,u] \> \in I(t)(c),\<L, [\ell,u) \> \in I(t)(c) \}$ and $\ell_2$ be the least of the set
$\cup_{I \in \cali'}\{\ell | \<L, (\ell,u] \> \in I(t)(c),\<L, (\ell,u) \> \in I(t)(c) \}$.
Then, for each $t\in \tau, c\in\calg, L \in \call$ let $u_1$ be the greatest element of the set
$\cup_{I \in \cali'}\{u | \<L, [\ell,u] \> \in I(t)(c),\<L, (\ell,u] \> \in I(t)(c) \}$\\ and $u_2$ be the greatest of the set\\
$\cup_{I \in \cali'}\{u | \<L, [\ell,u) \> \in I(t)(c),\<L, (\ell,u) \> \in I(t)(c) \}$.  If there is any interpretation $I$ in $\cali$ where there is not some $\bnd$ s.t. $\<L, \bnd \>\in I(t)(c)$ then add $\<L, [0,1] \>$ to $I'(t)(c)$.  If $\ell_2\leq \ell_1$ and $u_1 \geq u_2$ then add $\<L, (\ell_2,u_1] \>$ to $I'(t)(c)$.  If $\ell_2\leq \ell_1$ and $u_2 > u_1$ then add $\<L, (\ell_2,u_2) \>$ to $I'(t)(c)$.  If $\ell_2> \ell_1$ and $u_2 > u_1$ then add $\<L, [\ell_1,u_2) \>$ to $I'(t)(c)$.  Finally, if $\ell_2> \ell_1$ and $u_1 \geq u_2$ then add $\<L, [\ell_1,u_1] \>$ to $I'(t)(c)$.  Clearly, $I' = \inf(\cali')$.

In the next part of the proof, we show we can create $\sup(\cali')$ as follows.  We build interpretation $I'$.  For each $t\in \tau, c\in\calg, L \in \call$ let $\ell_1$ be the greatest of the set\\ $\cup_{I \in \cali'}\{\ell | \<L, [\ell,u] \> \in I(t)(c),\<L, [\ell,u) \> \in I(t)(c) \}$ and $\ell_2$ be the greatest of the set\\ $\cup_{I \in \cali'}\{\ell | \<L, (\ell,u] \> \in I(t)(c),\<L, (\ell,u) \> \in I(t)(c) \}$.  Then, for each $t\in \tau, c\in\calg, L \in \call$ let $u_1$ be the least element of the set $\cup_{I \in \cali'}\{u | \<L, [\ell,u] \> \in I(t)(c),\<L, (\ell,u] \> \in I(t)(c) \}$ and $u_2$ be the least of the set $\cup_{I \in \cali'}\{u | \<L, [\ell,u) \> \in I(t)(c),\<L, (\ell,u) \> \in I(t)(c) \}$.  If $\max(\ell_1,\ell_2)>\min(u_1,u_2)$ or $(\ell_2>\ell_1) \wedge (u_2<u_1)\wedge(\ell_2=u_2)$ then add $\<L, \emptyset \>$ to $I'(t)(c)$.  If $\ell_2> \ell_1$ and $u_1 \leq u_2$ then add $\<L, (\ell_2,u_1] \>$ to $I'(t)(c)$.  If $\ell_2> \ell_1$ and $u_2 < u_1$ then add $\<L, (\ell_2,u_2) \>$ to $I'(t)(c)$.  If $\ell_2\leq \ell_1$ and $u_2 < u_1$ then add $\<L, [\ell_1,u_2) \>$ to $I'(t)(c)$.  Finally, if $\ell_2\leq \ell_1$ and $u_1 \leq u_2$ then add $\<L, [\ell_1,u_1] \>$ to $I'(t)(c)$.  Clearly, $I' = \sup(\cali')$.\\
\indent As both $\inf(\cali')$ and $\sup(\cali')$ exist and are clearly in $\cali$ then the statement follows.
\end{proof}


\subsection{A single application of $\Gamma$ can be computed in polynomial time}
For interpretation $I$, $\Gamma(I)$ can be computed by conducting $O(|P|\cdot |V| \cdot \tm \cdot \dinm)$ satisfaction checks where $\dinm$ is the maximum in-degree of a node in the network.  (This combined with the assumption that the influence function is computed in constant time results in polynomial time computation for a single application of $\Gamma$.)

\begin{proof}
We note that a given rule will require the most satisfaction checks, as a rule will potentially affect a network atom of a certain label for each vertex-time point pair.  By the definition of $\RB$, a given rule clearly causes no more than $O(\dinm)$ satisfaction checks.  As the number of rules is no more than $|P|$, the statement follows.
\end{proof}

\pagebreak

\subsection{Proof of Theorem~\ref{gammaPolyConverge}}

Given interpretation $I$ and program $P$, there exists a natural number $k$ s.t.\
$\Gamma_P^k(I) = \Gamma_P^{k+1}(I)$, and
\[
k \in O\Big(|P|\cdot\dinm \cdot \tm \cdot |E|\Big)
\]
where $\dinm$ is the maximum in-degree in the network.

\begin{proof}
For a given vertex $i \in V$, we will use the notation $\dini$ to denote the number of incoming neighbors (of any edge type) and $\dinm = \max_i \dini$.  First we show that for a given $t \in \tau, i \in V,$ and $L \in \call$, a given rule $r$ can tighten the bound on a network atom formed with $L$ no more than $(\dini+1)\cdot (\dinm+1)$ times.  This is because a given rule adjusts the bound on a network atom based on the number of eligible and qualifying neighbors, which are bounded by $\dini,\dinm$ respectively.  At each application of $\Gamma$, at least one network atom must tighten.  Hence, as there are only  $O\Big(|P|\cdot\dinm \cdot \tm \cdot \sum_i \dini \Big)=$$O\Big(|P|\cdot\dinm \cdot \tm \cdot |E|\Big)$ tightenings possible, this is also the bound on the number of applications of $\Gamma$.
\end{proof}

\subsection{Proof of Lemma~\ref{boundLemma}}

If $I \models P$ and $I' \sqsubseteq I$ then $\Gamma(I') \sqsubseteq I$.
\begin{proof}
Suppose, BWOC, that $\Gamma(I') \sqsupset I$.  Then, there exists some $L \in \call$, $t\in \tau$ and $c \in \calg$ s.t.\ $\<L, \bnd \> \in I(t)(c)$,  $\<L, \bnd' \> \in I'(t)(c)$, and  $\<L, \bnd'' \> \in \Gamma(I')(t)(c)$ s.t.\ $\bnd \supset \bnd''$ and $\bnd' \supseteq \bnd''$.  There are four things that affect $\bnd''$: facts, rules, integrity constraints and $\bnd'$.  Clearly, we need not consider the effect that either facts or $\bnd'$ have on $\bnd''$, as $I$ satisfies all facts and $I' \sqsubseteq I$.  We also note that a given integrity constraint imposed by Definition~\ref{ibound} can tighten $\bnd''$ no more than the associated bound in any model.  Hence, there must be some rule $r=L \tarrow f, (\gedge,\gnode,h)_\ifl$ that causes $\bnd''$ to become less than $\bnd$.  As $\bnd'' \neq \bnd'$, we know that $t \in \mathit{TTS}(\Gamma(I'),c,r)\cap \mathit{TTS}(I',c,r)$.  As a result, we have $\Gamma(I')(t-\Delta t)(c) \models f$ and $I'(t-\Delta t)(c) \models f$.  Further, as $I \models P, I' \sqsubseteq I,$ and no rule can modify a non-fluent atom, we have
\begin{quote}
$|\ELIG(v,\gedge,\gnode,\Gamma(I')(t-\Delta t)| =$
\end{quote}
\begin{quote}
$|\ELIG(v,\gedge,\gnode,I'(t-\Delta t)| =$
\end{quote}
\begin{quote}
$|\ELIG(v,\gedge,\gnode,\Gamma(I')(t-\Delta t)|.$
\end{quote}
Further, we know that as $I' \sqsubseteq I$, it must be the case that
\[
|\QUAL(v,\gedge,\gnode,h,I(t-\Delta t))| \geq
\]
\[
|\QUAL(v,\gedge,\gnode,h,I'(t-\Delta t))|.
\]
This implies, by Axiom~2 that, $\BOUND(r,v,I(t-\Delta t)) \subseteq \BOUND(r,v,I'(t-\Delta t))$.  This then implies that $\bnd \subseteq \bnd''$, which is a contradiction.
\end{proof}


\subsection{Proof of Theorem~\ref{minModelGamma}}

$\Gamma^*_P$ is a minimal model for $P$.
\begin{proof}
\noindent Claim: If program $P$ is consistent then $\Gamma^*_P$ is a model of $P$.\\
Suppose, BWOC, that there is a fact in $P$ that $\Gamma^*_P$ does not satisfy.  However, by the definition of $\Gamma$ and the definition of a fact, $\Gamma^*_P$ must satisfy all facts as the bound on the weight associated with each fact is included in the intersection.  Further, we can also see by the definition of $\Gamma$ that $\Gamma^*_P$ strictly satisfies all non-fluent facts in $P$.  We also note that the final application of the $\Gamma$ operator ensures that all integrity constraints are satisfied by $\Gamma^*_P$.  Now, Suppose, BWOC, that there is a rule in $P$ that $\Gamma^*_P$ does not satisfy.  However, with each application of $\Gamma$, for each rule, we include the bound on the weight returned by the $\BOUND$ function for each time step in the target time step associated with that rule.  As $\Gamma$ is applied to convergence, and new bounds are intersected with each application, then we know that all time points in any associated target time set are considered in the intersection.\\
\noindent Proof of Theorem: The above claim tells us that $\Gamma^*_P \models P$.  Now consider interpretation $I$ s.t.\ $I \models P$.  As $\bot \sqsubseteq I$, multiple applications of Lemma~\ref{boundLemma} tell us that $\Gamma^*_P \sqsubseteq I$.  Hence, the statement follows.
\end{proof}

\subsection{Proof of Theorem~\ref{canonSound}}

If $P$ is consistent, then $\textsf{CANON\_PROC}(P)$ is the minimal canonical model of $P$.

\begin{proof}
\noindent CLAIM 1: If $P$ is consistent, then $\textsf{CANON\_PROC}(P)$ is a canonical model of $P$.\\
Clearly, $I=\textsf{CANON\_PROC}(P)$ satisfies all facts and integrity constraints in $P$.  Hence, we shall consider programs that only consist of rules in this proof.  We say $I$ $L$-canonically satisfies $P$ iff $I$ canonically satisfies $\{ r \in P \; | \; \textit{head}(r)=L\}$.  Clearly, $I$ canonically satisfies $P$ if for all $L\in\call$, $P$ $L$-canonically satisfies  by $I$. We say that $I$ is an $(L,c,q)$-canonically consistent interpretation if for $c \in \calg$, for the first $t \in \tau - \mathit{TTS}(I,c,L,P)-\{0\}$, $I(t)(c)\models \<L,bnd\>$ where $\<L,bnd\>\in I(t-1)(c)$.  Consider some $L\in \call$ and $c \in \calg$.  Clearly, $I$ is an $(L,c,0)$-model for $P$.  Let us assume, for some value $q$, that $I$ is an $(L,c,q-1)$ model for $P$.  Let time point $t$ be the $q$-th element of $\tau - \mathit{TTS}(I,c,L,P)-\{0\}$.  Consider the time step before time $t$ is considered in the for-loop at line~\ref{bigFor} of \textsf{CANON\_PROC}, which causes the condition at line~\ref{condLine} to be true.  By line~\ref{newCurFree}, $\tau - \mathit{TTS}(I,c,L,P)-\{0\} \subseteq cur\_free[c][L]$.  This means that $t$ is a member of both.  Hence, when $t$ is considered at line~\ref{bigFor}, the condition at line~\ref{condLine} is true, causing $\<L,bnd\> \in I(t)(c)\cap I(t-1)(c)$ and as the element $\<L,bnd\> \in I(t-1)(c)$ is not changed here, we have shown the $I$ is an $(L,c,q)$-model for $P$.  By the for-loop at line~\ref{vlLoop}, for all $L\in\call$ and $c\in\calg$, $I$ is an $(L,c,q)$-model for $P$.  Hence, at the for-loop at line~\ref{bigFor}, we can be assured that for $L\in\call$ and $c\in\calg$ that $I$ $(L,c,|\tau - \mathit{TTS}(I,c,L,P)-\{0\}|)$ satisfies $P$ -- which means that $I$ canonically satisfies $P$\\
\noindent CLAIM 2: If $I$ is a canonical model for $P$,\\ $cur\_interp \sqsubseteq I$ is an interpretation that also strictly satisfies all non-fluent facts in $P$, and $cur\_interp'$ is $cur\_interp$ after being manipulated in lines~\ref{insideForBegin}-\ref{insideForEnd} of \textsf{CANON\_PROC}, then $cur\_interp' \sqsubseteq I$.

We note that by the definition of satisfaction of a non-fluent fact, and the fact that both $cur\_interp$ and $I$ must strictly satisfy all non-fluent facts in $P$, we know that for all $c \in \calg$ and $L \in \call$ that:
\begin{eqnarray*}
TTS(I,c,L,P) &=&TTS(cur\_interp,c,L,P)\\
 &=& TTS(cur\_interp',c,L,P)
 \end{eqnarray*}
Let us assume that lines~\ref{insideForBegin}-\ref{insideForEnd} of the algorithm are changing $cur\_interp$ when the outer loop is considering time $t$ and that the condition at line~\ref{condLine} is true.  Clearly,
\begin{eqnarray*}
t \in \tau-TTS(I,v',L',P)-\{0\}
\end{eqnarray*}
As a result, for any $(v,L)$ pair considered at this point by the algorithm, if $\<L, \bnd \> \in I(t)(v)$ and $\<L, \bnd' \> \in I(t-1)(v)$ then we have $\bnd = \bnd'$.  By the algorithm, if we have $\<L, \bnd^{*} \> \in cur\_interp'(t)(v)$ and $\<L, \bnd^{**} \> \in cur\_interp'(t-1)(v)$ we have
that $\bnd^{*}=\bnd^{**}$.  As\\ $\<L, \bnd^{**} \> \in cur\_interp(t-1)(v)$, we know that $\bnd' \subseteq \bnd^{**}$.  As a result, we have $cur\_interp' \sqsubseteq I$, completing the claim.

\noindent Proof of theorem:  As initially $cur\_interp = \Gamma^*_P$ and $\Gamma^*_P \sqsubseteq I$ by Theorem~\ref{minModelGamma}, we note that the algorithm changes\\ $cur\_interp$ either by applying $\Gamma$ or manipulating it in lines~\ref{insideForBegin}-\ref{insideForEnd}, which tells us (by claim 2) that for all models $I$ of $P$ that\\ $\textsf{CANON\_PROC}(P) \sqsubseteq I$.  Since by claim 1 we know that\\ $\textsf{CANON\_PROC}(P) \models P$, the statement of the theorem follows.
\end{proof}

\subsection{Details on the Extracted Dataset}

One way in which \mancalog\ can be used is looking at proliferation of research on different topics. We look at research conducted on niacin, an organic compound commonly used for increasing levels of high density lipoproteins (HDL). Using Thomson Reuters Web of Knowledge (\url{http://wokinfo.com}) we were able to extract information on $4,202$ articles about niacin. This information was then processed using the Science of Science (Sci$^2$) Tool (\url{http://sci2.cns.iu.edu})  to extract numerous different networks such as author by paper networks, citation networks, and paper by subject networks. Each paper has attributes about when it was published, what journal it was published in, and what subjects the paper was about. During the first time period there is a total of $508$ papers with $856$ different authors and $1,231$ connections based on an author being cited as an author of a given paper. During the second time period, there is a total of $3,790$ papers with $10,465$ different authors and $16,772$ connections.

%

\end{document}